\colorlet{shadecolor}{yellow}
\newtheorem{property}{Proposition}
\newenvironment{proof}{{ \it Proof:\quad}}{\hfill $\blacksquare$\par}
\begin{document}
\bstctlcite{IEEEexample:BSTcontrol}
    \title{Single Image Dehazing Using Scene Depth Ordering}
  \author{Pengyang Ling, 
      Huaian Chen,
      Xiao Tan,
      Yimeng Shan,
      and Yi Jin

  \thanks{P. Ling, H. Chen, and X. Tan are with the School of Engineering Science, University of Science and Technology of China, Anhui 230022, China 
  	
  	Y. Shan is with the College of Mechanical and Electrical Engineering, Nanjing University of Aeronautics and Astronautics, Nanjing 210016, China
  	
  	Y. Jin is with the School of Engineering Science and with the School of Data Science, University of Science and Technology of China, Anhui 230022, China (e-mail: jinyi08@ustc.edu.cn)
        } }


\maketitle

\begin{abstract}
Images captured in hazy weather generally suffer from quality degradation, and many dehazing methods have been developed to solve this problem. However, single image dehazing problem is still challenging due to its ill-posed nature. In this paper, we propose a depth order guided single image dehazing method, which utilizes depth order in hazy images to guide the dehazing process to achieve a similar depth perception in corresponding dehazing results. The consistency of depth perception ensures that the regions that look farther or closer in hazy images also appear farther or closer in the corresponding dehazing results, and thus effectively avoid the undesired visual effects. To achieve this goal, a simple yet effective strategy is proposed to extract the depth order in hazy images, which offers a reference for depth perception in hazy weather. Additionally, a depth order embedded transformation model is devised, which performs transmission estimation under the guidance of depth order to realize an unchanged depth order in the dehazing results. The extracted depth order provides a powerful global constraint for the dehazing process, which contributes to the efficient utilization of global information, thereby bringing an overall improvement in restoration quality. Extensive experiments demonstrate that the proposed method can better recover potential structure and vivid color with higher computational efficiency than the state-of-the-art dehazing methods.
\end{abstract}

\begin{IEEEkeywords}
 single image dehazing, atmospheric scattering model, transmission estimation, visibility improvement.
\end{IEEEkeywords}

%
\IEEEpeerreviewmaketitle


\section{Introduction}

\IEEEPARstart{D}{ue} to the presence of scattering caused by turbid medium, images captured in hazy weather are generally accompanied by visibility degradation. This damage is mainly caused by two aspects: i) the attenuation of reflected light from the objects in the medium, and ii) the addition of scattered light from the haze particles. Both of these factors will reduce the contrast and shift the colors in hazy images, leading to a decrease in the performance of subsequent advanced vision applications (such as traffic monitoring, target tracking, intelligent surveillance, etc.)\cite{app_1,app_2}. Therefore, to ensure the stability of vision-based systems, effective dehazing algorithms are in high demand.  
 
To obtain haze-free images, an early solution was to integrate the information from multiple images. For example, the polarization-based methods\cite{extra_2,extra_1,iterative} utilize multiple images taken under different polarization degrees to compensate for the lack of information in single image. Additionally, the dehazing methods based in multiple images captured on different weather scenes are also exploited in \cite{different_weather_1,different_weather_2,different_weather_3}. These methods can recover promising results through effective information integration. However, it is not trivial to obtain multiple images under severe weather conditions, resulting in the limited practical applications.

To address this problem, many researchers have attempted to develop single image dehazing methods to convert a hazy image with low contrast into a haze-free image rich in colors and details. As early attempts, some researches \cite{enhancement_1,enhancement_2,enhancement_3,enhancement_4} directly adopted traditional enhancement methods to conduct contrast enhancement, which help enhance the visibility of hazy images. However, the results of these enhancement-based approaches lack the constraint of haze degradation models, and thus some obvious drawbacks, such as over-saturation and color distortion, are usually produced in their dehazing results.

\begin{figure}
	\begin{center}
		\includegraphics[width=3.5in]{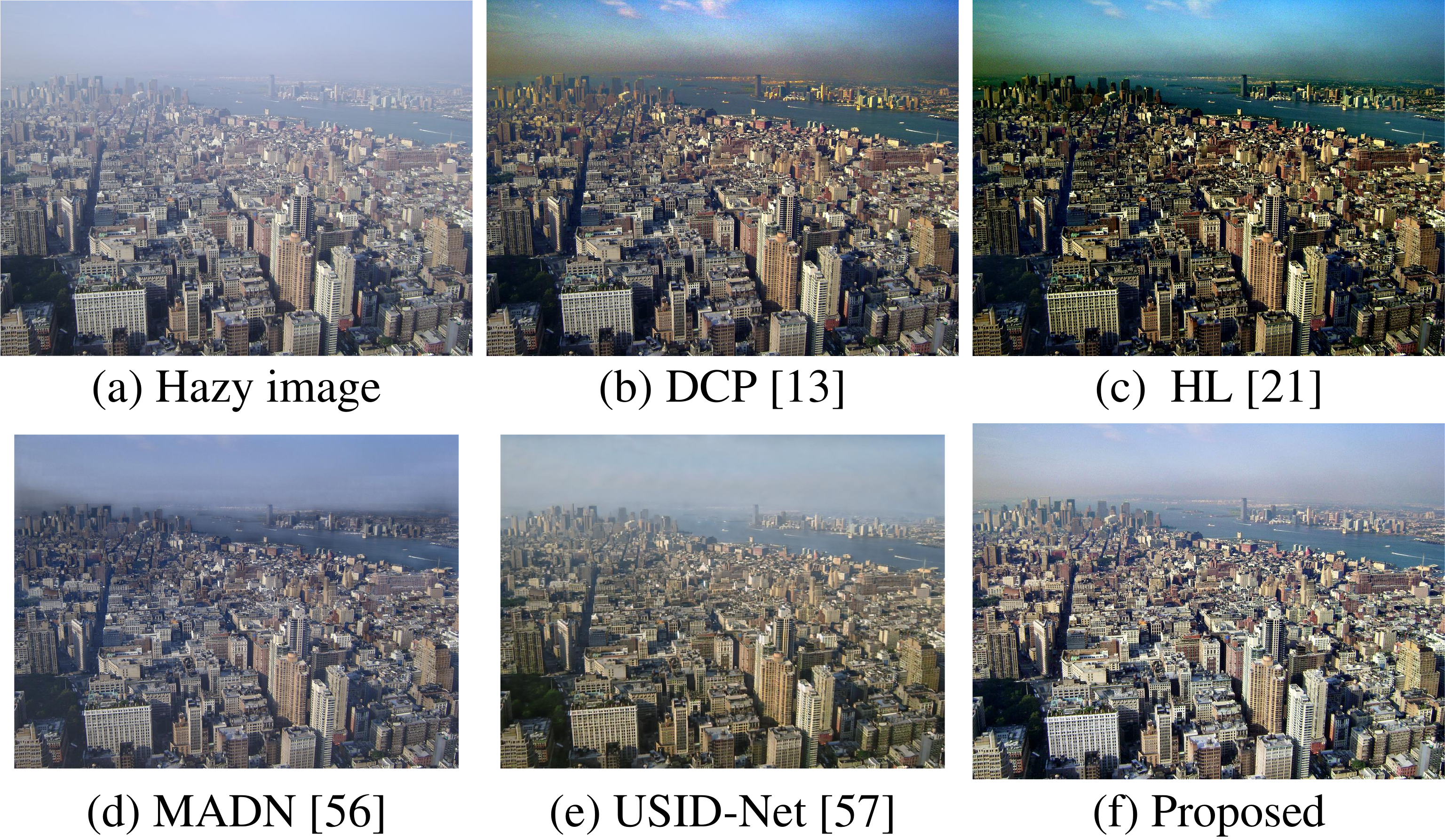}\\
		\caption{The dehazing results of different methods on challenging real-world hazy scene.}
		\label{fig_first}
	\end{center}
	\vspace{-1em} 
\end{figure}

As such, most later works are prone to develop dehazing methods based on reasonable assumptions or priors \cite{darkchannel,guided,cap,lt,fattal_1,fattal_2,ellipsoid_framework,cep,hl,idgcp}. These methods leverage priors as extra constraints to reduce the uncertainty of transmission estimation. Although these methods produce impressive results on many hazy scenes, they may generate unsatisfactory results for some scenes, in which the used priors or assumptions are invalid. Additionally, fusion strategies are exploited in \cite{fusion_1,fusion_2,defade} to merge the advantages of different methods. Typically, in these methods, a haze-free image is generated as the blend of multiple enhanced images via Laplacian pyramid decomposition. However, such a fusion scheme may encounter failures in dark or dense haze scenes.

Recently, convolutional neural network (CNN) based methods have made remarkable achievements \cite{dehazenet,mscnn,gated,gridhazenet,aodnet,rank,domainada,tcn,compact_dehazing,unsupervised_1,unsupervised_2,refinednet,towards} on single image dehazing task. By utilizing the excellent capability of CNN in learning latent features from haze-related data, a model with strong dehazing performance can be obtained. Regrettably, CNN-based methods require advanced hardware and vast training data, which limits their performance in real applications. In general, due to the ill-posed nature of single image dehazing, most existing dehazing methods are still affected by the degraded visibility to some extent, which can be seen in Figs. \ref{fig_first}(b)-(e).

In this paper, we propose a depth order guided single image dehazing method, which removes haze while retaining the original depth order in hazy images, and thus achieves a similar depth perception in the corresponding dehazing results. Specifically, a well-designed strategy is proposed to extract the depth order in hazy images, which offers a reference for depth perception in hazy weather. Then the extracted depth order is embedded into a transformation model for transmission estimation, which ensures an unchanged depth order while increasing the contrast. Finally, the dehazing results are obtained based on the estimated transmission, which show a similar depth perception as that in the corresponding hazy image. The consistency of depth perception provides a powerful global constraint for single image dehazing problem, which helps realize more effective haze removal based on the information of the whole image, thereby achieving a comprehensive visibility improvement. As an example, the dehazing result of Fig. \ref{fig_first}(a) obtained by the proposed method is shown in Fig. \ref{fig_first}(f). Moreover, it is worth mentioning that the computational complexity of the proposed method is also satisfactory due to the absence of pre-processing and iteration steps.  In summary, the main contributions of this work can be summarized as follows.

\begin{itemize}
	\item[$\bullet$]We propose a simple yet effective strategy to extract depth order in hazy images, which offers a reference for depth perception in hazy weather without estimating the specific depth values.
	\item[$\bullet$]We establish a transformation model for transmission estimation, which helps achieve a similar depth perception in dehazing results as that in corresponding hazy images and thus effectively relieves unreasonable visual effects.
	\item[$\bullet$]The proposed consistency of depth perception provides a powerful global constraint for the dehazing process and brings an efficient integration of global information. Experiments results validate the superiority of the proposed method in terms of both dehazing performance and computational complexity.
\end{itemize}

The remainder of this paper is organized as follows. Section II briefly reviews the related work. Section III presents the motivation and strategy for depth order extraction. Section IV gives the implementation details of the depth order guided dehazing model. Section V presents the experiments in various hazy scenes, and the conclusion is given in Section VI.
\section{Related Work} 
Generally, single image dehazing methods can be roughly classified into two categories: (i) prior-based methods, and (ii) data-driven methods.
\subsection {Prior-based Methods}
Although there is no determined solution for single hazy image input, reasonable solutions are limited. Prior-based methods \cite{darkchannel,guided,cap,lt,fattal_1,fattal_2,ellipsoid_framework,cep,hl,idgcp} utilize summarized regularities as additional constraints and greatly reduce the solution space. 

He \textit{et al.} \cite{darkchannel} revealed that most non-sky patches of clear images have at least one pixel with extremely low intensity in one color channel, known as dark channel prior (DCP). This prior greatly reduces the computational complexity of transmission estimation and produces an impressive result; however, the time cost of the refinement process is too expensive. To solve this problem, the guided filter was subsequently devised \cite{guided} to replace the soft matting in DCP and thus improved the overall algorithm efficiency. In \cite{cap}, Zhu \textit{et al.} pointed out that the values of brightness and saturation are effective cues for modeling depth in hazy images, and thus a simple linear model was trained to estimate the scene depth. However, the limitation of  independent variable values results in a limited dehazing performance in distant areas. Wang \textit{et al.} \cite{lt} utilized the linear transformation (LT) to estimate the minimum color component in haze-free images, which was assumed to be positively correlated with the imaging distance.

In \cite{fattal_1}, image dehazing was implemented under the assumption that surface shading and scene transmission are locally uncorrelated. This assumption is reliable when there is rich variation in surface shading and scene transmission but is not valid in dense haze areas with little variation. Later, Fattle \cite{fattal_2} proposed a color-line model to describe the 1-dimensional distribution of local pixels in RGB color space, which utilizes the distribution characteristics of local pixels to assist transmission estimation. Similarly, methods based on color ellipsoid were explored to extract the aggregation feature of local pixels \cite{ellipsoid_framework,cep}. Since the extracted features do not depend on a single pixel, these methods are generally more robust but are still limited to local information. To eliminate this limitation, methods based on global information were developed. Berman \textit{et al.} \cite{hl} observed that colors in a given haze-free image could be well approximated by a few hundred distinct colors that form tight clusters in RGB space. Thereby, the estimation for atmospheric light and transmission value can be conducted by using the non-local strategy. Ju \textit{et al.} \cite{idgcp} proposed gamma correction prior to embed depth information into gamma correction, and haze is then removed based on the virtual transformation from the whole hazy image.

\subsection {Data-driven Methods}
Recently, the blossom of deep learning has spawned an increasing number of CNN-based dehazing methods\cite{dehazenet,mscnn,gated,gridhazenet,aodnet,rank,domainada,tcn,compact_dehazing,unsupervised_1,unsupervised_2,refinednet,towards}. These methods train networks to learn a mapping function from hazy images to haze-free images to achieve haze removal. 

In \cite{dehazenet}, Cai \textit{et al.} proposed a dehaze network (DehazeNet), which first extracts priors-guided features and then merges them into an end-to-end network to estimate the final transmission map. Later, Ren \textit{et al.} \cite{mscnn} proposed a multi-scale CNN (MSCNN) to gradually optimize the transmission map from the coarse scale to the fine scale. In \cite{gated}, a gated fusion network (GFN)  was devised to fuse the features of three derived images from hazy images. Similarly, an attention-based multi-scale network (GridDehazeNet) \cite{gridhazenet} was proposed to enrich the diversity of image inputs for the improvement of  feature extraction in the following grid network. In \cite{aodnet}, Li \textit{et al.} proposed a lightweight CNN called AOD-Net, in which the transmission value and atmospheric light are integrated into one parameter to be estimated, which helps minimize the reconstruction error in final dehazing results. In\cite{rank}, Song \textit{et al.} proposed a ranking CNN to capture the structural and statistical features of hazy images simultaneously, which contributes to a more effective haze removal. To maintain excellent dehazing performance on real-world scenes, Shao \textit{et al.} \cite{domainada} designed a domain adaptation paradigm to reduce the impact caused by domain shift. To relieve the intensity-degradation of dark and shadow regions in dehazing process, Shin \textit{et al.} \cite{tcn} developed a triple-convolutional network (TCN), which performs targeted haze removal and image enhancement according to the degradation characteristics of different regions. For a balance of parameter numbers and dehazing performance, Wu \textit{et al.} \cite{compact_dehazing} proposed a compact dehazing network, in which contrastive learning was adopted to simultaneously extract features from both positive and negative samples. Additionally, some unsupervised \cite{unsupervised_1,unsupervised_2} and weakly supervised \cite{refinednet,towards} methods were also developed to alleviate the limitation caused by the lack of paired hazy and haze-free images in real scenes.

Recently, the transformer model, which shows strong capability in achieving long-range interactions through attention mechanism, has gradually become mainstream in high-level vision tasks. Accordingly, some researches were conducted to explore the potential of transformer-based architecture in image restoration. In \cite{transformer_dehaze_1}, Guo \textit{et al.} introduced modulation matrices to alleviate the feature inconsistency between CNNs and transformers, enabling great integration of local representation and global context. In \cite{transformer_dehaze_2}, Song \textit{et al.} improved the swim transformer \cite{swim_transformer} architecture according to the characteristic of dehazing task, termed DehazeFormer, which achieves advanced dehazing performance in various benchmarks under the supervised learning strategy. To reduce the complexity for higher efficiency, Qiu \textit {et al.} \cite{transformer_dehaze_3} proposed to apply Taylor expansion to approximate the softmax-attention, in which a multi-scale attention refinement mechanism is introduced for error correction. In \cite{transformer_dehaze_4}, Gui \textit{et al.} employed the transformer consisting of double decoders as the backbone, facilitating high-quality haze removal. Generally, these methods also follow the learning-based approach, and thus their performance highly depends on the diversity and quality of training data.

\section{DEPTH ORDER IN DEHAZING PROCESS} 
According to Koschmieder's law \cite{law}, the degradation process of hazy images can be described by the atmospheric scattering model as: 
\begin{equation}\label{eq.1}
{\emph{H}}^c(x,y) = {\emph{J}^c}(x,y) \cdot t(x,y) + {\emph{A}^c} \cdot (1 - t(x,y)),
\end{equation}
where $(x,y)$ is the coordinate of a pixel within the image, $c \in \{ r,g,b\} $ is the color channel, ${\emph{H}}$ is the hazy image, ${\emph{J}}$ is the corresponding haze-free image, ${\emph{A}}$ is the atmospheric light, and $t$ is the medium transmission describing the portion of the light that is not scattered. Eq. (\ref{eq.1}) can be transformed into
\begin{equation}\label{eq.2}
	{\emph{H}^c}(x,y) - {\emph{A}^c}= ({\emph{J}^c}(x,y) - {\emph{A}^c}) \cdot t(x,y).
\end{equation}
Eq. (\ref{eq.2}) indicates that haze will decrease the difference between the pixels and the atmospheric light in RGB color space. In this work, we take ${\theta}(x,y)$ to denote this color difference, which can be expressed as:
\begin{equation}\label{eq.3}
	\theta  = \sqrt {{{({I^r} - {A^r})}^2} + {{({I^g} - {A^g})}^2} + {{({I^b} - {A^b})}^2}} ,
\end{equation}
where $I \in \{ H,J\} $ denotes hazy images and haze-free images, respectively.
\begin{figure}
	\begin{center}
		\includegraphics[width=3.5in]{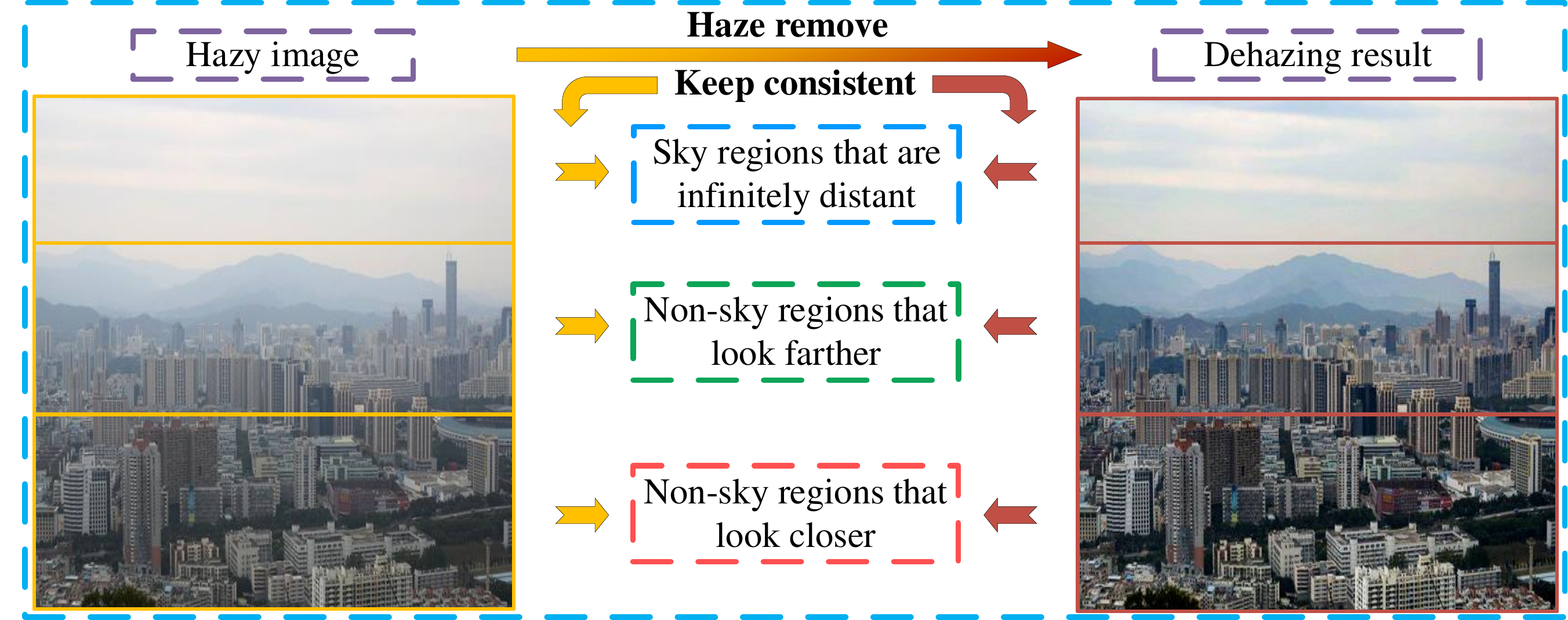}
		\caption{The consistency of depth perception between the hazy image and the dehazing result of the proposed method.}
		\label{fig_motivation}
	\end{center}
\end{figure}
Generally, assuming that medium is homogeneous, the transmission value can be modeled as:
\begin{equation}\label{eq.4}
	t(x,y) = {e^{ - \beta \cdot d(x,y)}},
\end{equation}
where $\beta$ is the scattering coefficient, and $d(x,y)$ is the distance from source to the camera at pixel position $(x,y)$. Since $\beta$ and $d(x,y)$ are both positive, the range of $t$ is (0,1). 

The purpose of single image dehazing is to restore ${\emph{J}^c}(x,y)$ from ${\emph{H}^c}(x,y)$, which can be expressed as:
\begin{equation}\label{eq.5}
	{\emph{J}^c}(x,y) = \frac{{{\emph{H}^c}(x,y) - {\emph{A}^c}}}{{t(x,y)}} + {\emph{A}^c}.
\end{equation}

\begin{figure*}
	\begin{center}
		\includegraphics[width=6.5in]{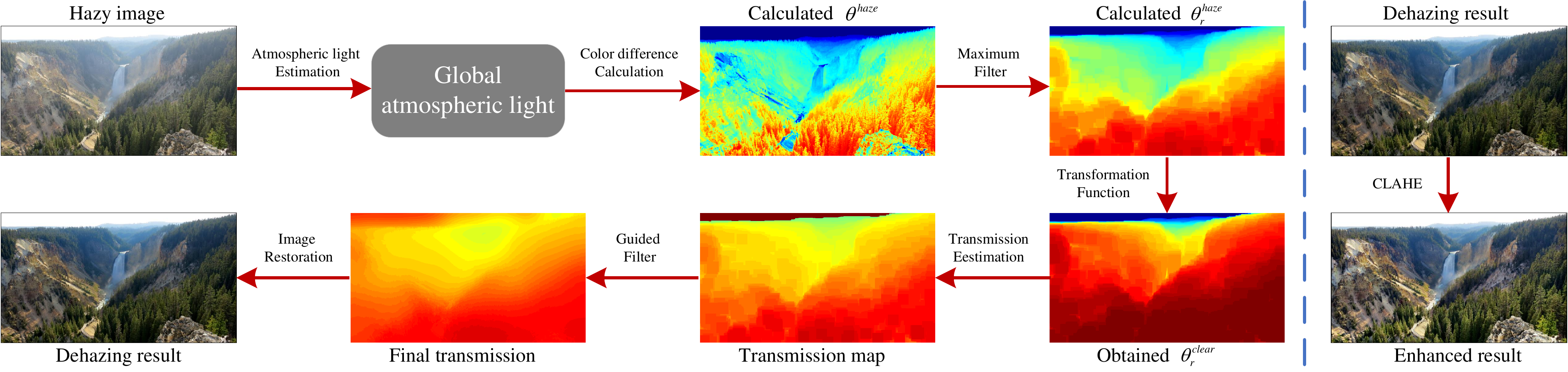}
		\caption{Overview of the proposed method.}
		\label{fig_overview}
	\end{center}
\end{figure*}

\subsection {Motivation} Although hazy images lose information about structure and details, they provide the real perception of humans in hazy weather. Meanwhile, notice that haze exists on both hazy and clear days, which provides the cue to perceive depth \cite{darkchannel,depth_cue}. Therefore, we expect that such depth perception is unchanged during the dehazing process, which ensures that the regions that look farther or closer in the hazy image also appear farther or closer in the corresponding dehazing result (see the example in Fig. \ref{fig_motivation}). This consistency of depth perception gives a powerful global constraint for the whole image, which helps achieve more effective haze removal. To avoid the direct estimation for depth value that may vary widely in outdoor scenes, in this work, we take depth order as the reference of depth perception, which only involves depth comparison.

Substituting Eq. (\ref{eq.3}) into Eq. (\ref{eq.2}), we have 
\begin{equation}\label{eq.6}
{\theta ^{haze}}(x,y) = {\theta ^{clear}}(x,y) \cdot t(x,y),
\end{equation}
where $\theta ^{haze}(x,y)$ and $\theta ^{clear}(x,y)$ denote the color difference between pixels and atmospheric light in hazy images and haze-free images, respectively. 
According to Eq. (\ref{eq.4}), Eq. (\ref{eq.6}) can be rewritten as:
\begin{equation}\label{eq.7}
	{\theta ^{haze}}(x,y)  = {\theta ^{clear}}(x,y) \cdot {e^{ - \beta \cdot d(x,y)}}.
\end{equation}
Eq. (\ref{eq.7}) reveals that the value of  ${\theta ^{haze}}(x,y)$ in hazy images is determined by there factors, i.e., ${\theta ^{clear}}(x,y)$ value in corresponding haze-free images, scattering coefficient $\beta $, and  depth $d(x,y)$. Given a hazy image, $\beta $ is a constant. Therefore, ${\theta ^{clear}}(x,y)$ and $d(x,y)$ jointly determine the order of ${\theta ^{haze}}(x,y)$ in hazy images. Note that the influence of depth $d(x,y)$ works with the form of an exponential function, which shows an explosive decay. Naturally, we expect that the value of $d(x,y)$ could dominate the order of ${\theta ^{haze}}(x,y)$ value. If it is feasible, according to Eq. (\ref{eq.7}), the reverse order of ${\theta ^{haze}}(x,y)$ is a good approximation for the order of scene depth $d(x,y)$, i.e., the smaller value of ${\theta ^{haze}}(x,y)$ is, the higher value of $d(x,y)$ is. The proposed method aims to employ the order of depth order as global constraint for an effective haze removal, the overview is depicted in Fig. \ref{fig_overview}.

\begin{figure*}
	\begin{center}
		\includegraphics[width=7in]{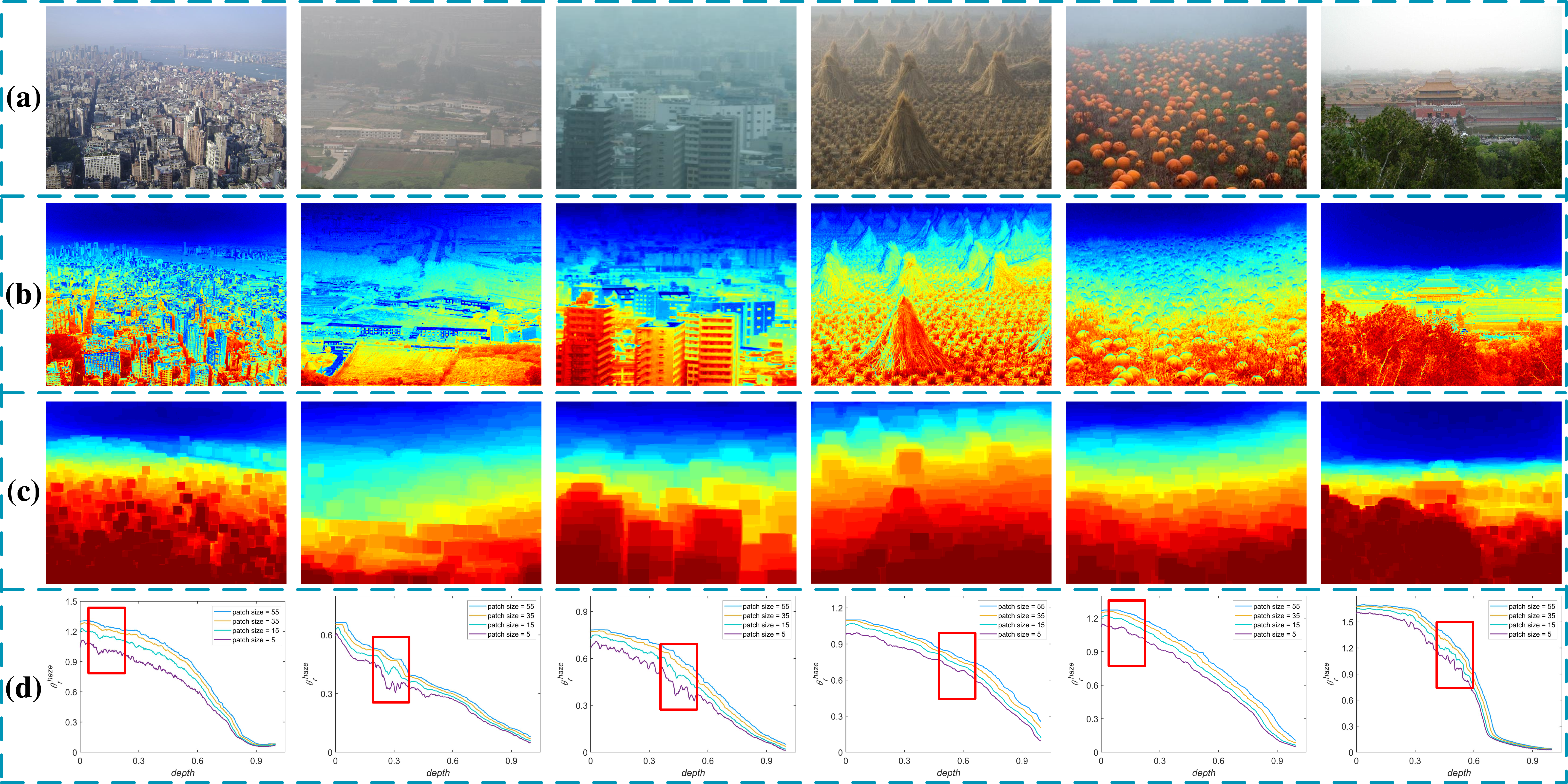}
		\caption{Examples of the relationship between $\theta ^{haze}(x,y)$ and scene depth in real-world hazy images.  \textbf{(a):} Hazy images.  \textbf{(b):} $\theta ^{haze}(x,y)$ obtained by Eq. (\ref{eq.3}).  \textbf{(c):} $\theta _r^{haze}(x,y)$ obtained by Eqs. (\ref{eq.3}) and (\ref{eq.12}) with $r=35$.  \textbf{(d):} The relationship between $\theta _r^{haze}(x,y)$ and scene depth under different patch sizes.}
		\label{fig_demonstration}
	\end{center}
\end{figure*}
\subsection {Depth Order Extraction}

Suppose that there are two pixels $({x_0},{y_0})$ and $({x_1},{y_1})$ in the haze-free image, and the corresponding color difference with atmospheric light are ${\theta ^{clear}}({x_0},{y_0})$ and ${\theta ^{clear}}({x_1},{y_1})$, respectively. Without loss of generality, the depth order is assumed to be
\begin{equation}\label{eq.8}
d({x_0},{y_0}) \ge d({x_1},{y_1}).
\end{equation}
The condition satisfying that the order of scene depth $d(x,y)$ can be approximated by the reverse order of ${\theta ^{haze}}(x,y)$ can be expressed as:
\begin{equation}\label{eq.9}
	{\theta ^{haze}}({x_0},{y_0}) \le {\theta ^{haze}}({x_1},{y_1}).
\end{equation}
According to Eq. (\ref{eq.7}) , we have
\begin{equation}\label{eq.10}
\frac{{{\theta ^{haze}}({x_0},{y_0})}}{{{\theta ^{haze}}({x_1},{y_1})}}\!=\! \frac{{{\theta ^{clear}}({x_0},{y_0})}}{{{\theta ^{clear}}({x_1},{y_1})}} \cdot {e^{ \! -\! \beta  \cdot (d({x_0},{y_0}) - d({x_1},{y_1}))}}.
\end{equation}
Note that $\beta $ is positive, thus inequality (\ref{eq.9}) is equivalent to
\begin{equation}\label{eq.11}
\beta  \cdot (d({x_0},{y_0}) - d({x_1},{y_1})) \ge ln(\frac{{{\theta ^{clear}}({x_0},{y_0})}}{{{\theta ^{clear}}({x_1},{y_1})}}).
\end{equation}
According to inequality (\ref{eq.8}), the left side of inequality (\ref{eq.11}) is non-negative. Thus inequality (\ref{eq.11}) obviously holds for every  ${\theta ^{clear}}({x_0},{y_0}) \le {\theta ^{clear}}({x_1},{y_1})$. In the case of  ${\theta ^{clear}}({x_0},{y_0})  >  {\theta ^{clear}}({x_1},{y_1})$, there are three factors that contribute to the satisfaction of inequality (\ref{eq.11}). First, a larger value of $\beta $. Second, a large value of the depth difference $d({x_0},{y_0}) - d({x_1},{y_1})$. Third, a smaller value of the ratio  $\frac{{{\theta ^{clear}}({x_0},{y_0})}}{{{\theta ^{clear}}({x_1},{y_1})}}$. The first and second factors correspond to the dense haze and sharp depth variation, respectively, which is easily satisfied in outdoor scenes with dense haze. Our goal is to minimize the value of the ratio  $\frac{{{\theta ^{clear}}({x_0},{y_0})}}{{{\theta ^{clear}}({x_1},{y_1})}}$, thereby facilitating the satisfaction of inequality (\ref{eq.11}) in this case.

Based on the assumption that local pixels share the same depth, we can obtain $d(x,y) = \mathop {max}\limits_{(m,n) \in \pi (x,y)}(d({m},{n})) $. Therefore, we replace $\theta$ with its local maximum value to focus on the pixels with maximum $\theta$ in local image patches, which can be expressed as:
\begin{equation}\label{eq.12}
	{\theta _r}(x,y) = \mathop {max}\limits_{(m,n) \in \pi (x,y)} (\theta (m,n)),
\end{equation}
where $\pi(x,y)$ is the local patch centered at pixel $(x,y)$ with patch size $r$.
From Eq. (\ref{eq.7}), those pixels with maximum $\theta^{haze}$ in hazy images also have maximum $\theta^{clear}$ in the corresponding clear images due to the locally constant depth.

According to Eq. (\ref{eq.3}), the maximum value of ${\theta _r^{clear}}({x_0},{y_0})$ is determined, i.e., $\sqrt 3$. By increasing the value of the denominator under a limited value of the numerator, the value of the ratio $\frac{{\theta _r^{clear}({x_0},{y_0})}}{{\theta _r^{clear}({x_1},{y_1})}}$ can be highly limited.

For a better understanding, considering the standard hazy weather with $\beta =1$ \cite{cap} and $ A =(1,1,1)$ \cite{standard}, for the extreme case that $\theta _r^{clear}({x_0},{y_0})=\sqrt 3$, inequality (\ref{eq.11}) can be expressed as:
\begin{equation}\label{eq.13}
	(d({x_0},{y_0}) - d({x_1},{y_1})) \ge ln(\frac{{\sqrt 3 }}{{{\theta _r^{clear}}({x_1},{y_1})}}).
\end{equation}
Note that haze-free images have rich color variations and surface shading in non-sky regions, which help increase $\theta _r^{clear}({x},{y})$ values. Statistical results \cite{darkchannel} demonstrate that there is at least one pixel, which has one channel close to zero, in the local patches for non-sky regions in haze-free images. According to Eq. (\ref{eq.3}),  $\theta _r^{clear}({x},{y})$ values of these pixels are larger than $1$, thus the sufficient condition for inequality (\ref{eq.13}) to be satisfied can be expressed as:
\begin{equation}\label{eq.14}
		(d({x_0},{y_0})\! -\! d({x_1},{y_1})) \ge ln(\frac{{\sqrt 3 }}{1}) \ge ln(\frac{{\sqrt 3 }}{{\theta _r^{clear}({x_1},{y_1})}}).
\end{equation}
Accordingly, in this case, the minimum depth difference identifying different non-sky regions is $ln(\sqrt 3 )=0.5493$. Meanwhile, due to the extremely small values of ${\theta _r^{haze}}({x},{y})$ in sky regions, ${\theta _r^{haze}}({x},{y})$ values also help distinguish the sky regions from the non-sky regions, which helps the targeted suppression of noise amplification in sky regions. 
Therefore, the reverse order of $\theta_r^{haze}$ is a good approximation of the order of depth, i.e., the smaller value of $\theta_{r}^{haze}(x,y)$ is, the higher value of depth $d(x,y)$ is.

\subsection {Effectiveness Demonstration}

For an intuitive demonstration, six different types of hazy images are collected and shown in Fig. \ref{fig_demonstration}(a). The corresponding $\theta ^{haze}(x,y)$ and $\theta _r^{haze}(x,y)$ values are presented in Figs. \ref{fig_demonstration}(b) and \ref{fig_demonstration}(c), respectively. The relationship between $\theta _r^{haze}(x,y)$  and scene depth under different patch sizes are given in Fig. \ref{fig_demonstration}(d). Note that, since the ground-truth scene depth value in Fig. \ref{fig_demonstration}(a) are not available, the abscissa in Fig. \ref{fig_demonstration}(d) denotes the row order (0 for the bottom row and 1 the for the top row) of each hazy image in Fig. \ref{fig_demonstration}(a), and the ordinate denotes the mean value of $\theta _r^{haze}(x,y)$ for each row. Consequently, the relationship between $\theta _r^{haze}(x,y)$ and scene depth can be indirectly described by the relationship between $\theta _r^{haze}(x,y)$ and row order that is approximately positively related with the scene depth (see Fig. \ref{fig_demonstration}(a)).

From Figs. \ref{fig_demonstration}(b)-(c), it is observed that the $\theta ^{haze}(x,y)$ increases as depth decreases in all hazy images, and Eq. (\ref{eq.12}) can markedly strengthen such relationship, which confirms the effectiveness of the proposed extracted strategy. A clearer illustration of the capacity of Eq. (\ref{eq.12}) is shown in Fig. \ref{fig_demonstration}(d), it can be seen that a large patch size can effectively erase the fluctuation in the variations (see the red rectangles in Fig. \ref{fig_demonstration}(d)), which helps bring a more effective extraction for depth order. This is because the ratio  $\frac{{\theta _r^{clear}({x_0},{y_0})}}{{\theta _r^{clear}({x_1},{y_1})}}$ can be better limited under a larger patch size, as analyzed before. The extracted depth order gives the reference of depth perception, which provides the bridge of the conversion from hazy images to haze-free images.

To explore the relationship between the extracted depth order in hazy image and corresponding haze-free image, Spearman's rank correlation coefficient $\rho $ \cite{spearman} is adopted, which can be expressed as:
\begin{equation}\label{eq.15}
\rho  = \frac{{\sum\nolimits_i {({X_i} - \bar X)({Y_i} - \bar Y)} }}{{\sqrt {{{\sum\nolimits_i {{{({X_i} - \bar X)}^2}\sum\nolimits_i {({Y_i} - \bar Y)} } }^2}} }}
\end{equation}
where $X$ and $Y$ are the rank of two sequences, and $ \bar X$ and $ \bar Y $ are the corresponding average values. For example, the rank of sequence ${\rm{\{  0}}{\rm{.2, 1}}{\rm{.2, 0}}{\rm{.4\} }}$ is ${\rm{\{ 1, 3, 2\}  }} $. Note that the greater the absolute value of $\rho $ is, the more correlated the two sequences are. Particularly, $|\rho| = 1$ denotes that they are monotonously related. We calculated the $\rho $ values of depth order, which are obtained by using Eqs. (\ref{eq.3}) and (\ref{eq.12}) with path size of 15 ( the same as that in \cite{darkchannel} ), in 500 synthetic hazy images and corresponding ground-truth images from the SOTS Dataset \cite{RESIDE}. The corresponding statistical histogram and cumulative density are given in Figs. \ref{fig_cdf_spearman}(a) and \ref{fig_cdf_spearman}(b), respectively.

It can be concluded from Fig. \ref{fig_cdf_spearman}(b) that $90\% $ of the samples have high values of $\rho>0.8$, which indicates the high correlation between the extracted depth order of hazy images and corresponding ground-truth images. As negative examples, Fig. \ref{fig_demonstration_spearman} gives the comparison between ground-truth images and the dehazing results with poor restoration quality. It is seen that a smaller $\rho $ value usually implies a lower restoration quality, which can be attributed to failure of ensuring depth perception consistency between hazy images and corresponding dehazing results. Therefore, it is necessary to retain the depth order of hazy images during dehazing process, which helps to achieve effective haze removal under the powerful global constraint of consistency of depth perception.
\begin{figure}
	\begin{center}
		\includegraphics[width=3.5in]{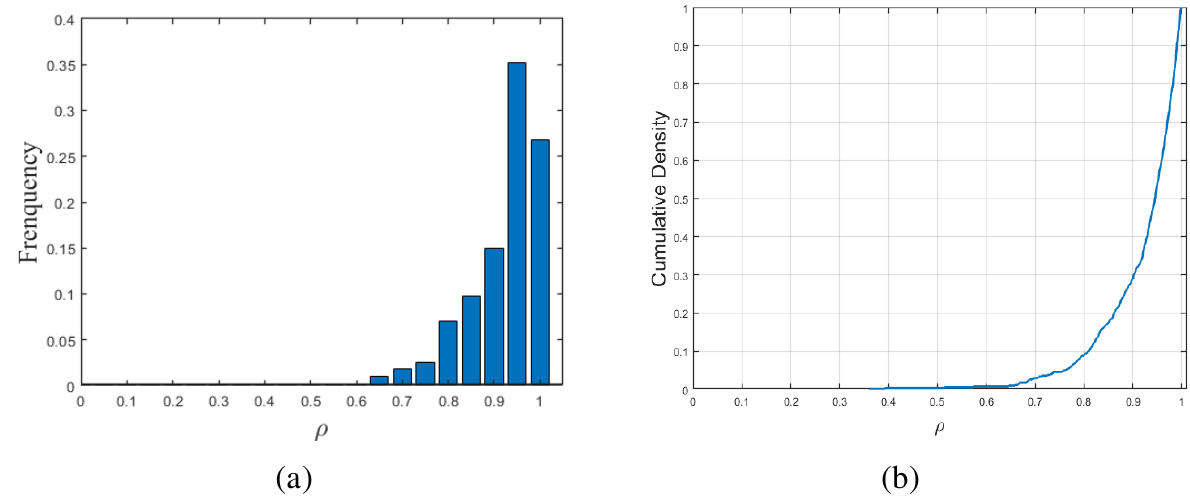}
		\caption{Frequency and cumulative density statistics of the $\rho$ values on 500 paired hazy images and ground-truth images. }
		\label{fig_cdf_spearman}
	\end{center}
\end{figure} 
\begin{figure}
	\begin{center}
		\includegraphics[width=3.5in]{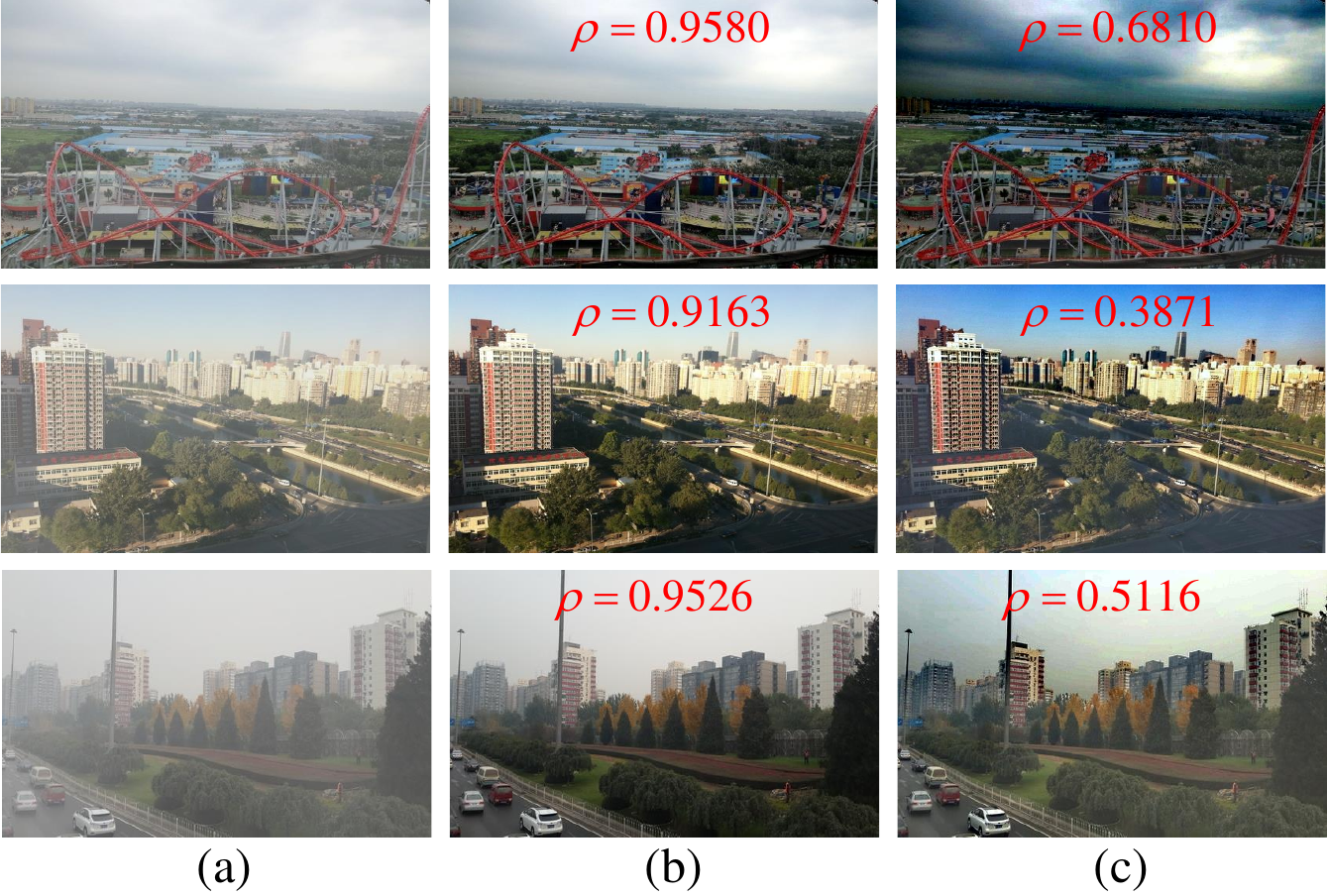}
		\caption{Comparison between the ground-truth images and the dehazing results with low $\rho$ values. \textbf{(a):} Hazy images. \textbf{(b):} Ground-truth images. \textbf{(c):} Dehazing results by HL \cite{hl}.}
		\label{fig_demonstration_spearman}
	\end{center}
\end{figure} 
\section{HAZE REMOVAL BASED ON DEPTH ORDER}

In this section, based on the atmospheric scattering model and the extracted depth perception, a depth order guided dehazing method is developed. The consistency of depth perception that offers the global constraint is embedded into the transmission estimation via a finely devised transformation model, and thus provides effective global guidance for image dehazing, resulting in higher dehazing performance.  

\subsection {Transformation Model for Transmission Estimation}
Given a hazy image, supposing that atmospheric light ${\emph{A}}$ is known,  ${\theta _r^{haze}}(x,y)$ values can be obtained from Eqs. (\ref{eq.3}) and (\ref{eq.12}). Therefore, according to Eq. (\ref{eq.6}), the estimation for transmission is equivalent to the estimation for ${\theta _r^{clear}}(x,y)$. Based on the analysis in last section, the design goals of the estimation for ${\theta _r^{clear}}(x,y)$ can be summarized as follows:

\begin{itemize}
	\item[$1)$]The ${\theta _r^{clear}}(x,y)$ values in dehazing results must be larger than the corresponding ${\theta _r^{haze}}(x,y)$ values in hazy images, as revealed by Eq. (\ref{eq.6}).
	\item[$2)$]The order of ${\theta}_r^{clear}(x,y)$ values should remain consistent with the order of ${\theta}_r^{haze}(x,y)$ values, thereby ensuring that the depth perception in the dehazing results is consistent with that in corresponding hazy images.
\end{itemize}

Design goal 1) can be expressed as
\begin{equation}\label{eq.16}
	\theta _r^{haze}(x,y) \le \theta _r^{clear}(x,y).
\end{equation}
Since $\theta _r^{haze}(x,y)$ is known, inequality (\ref{eq.16}) gives lower boundary of $\theta _r^{clear}(x,y)$. For a given image, let  $\hat \theta _r^{haze}$ and $\hat \theta _r^{clear}$ denote the maximum values of ${\theta _r^{haze}}(x,y)$ and ${\theta _r^{clear}}(x,y)$, respectively, which can be expressed as:
\begin{equation}\label{eq.17}
\hat \theta _r^{haze}\!=\! max(\theta _r^{haze}(x,y))\!\le \! max(\theta _r^{clear}(x,y))\!=\! \hat \theta _r^{clear}.
\end{equation}
From inequalities (\ref{eq.16}) and (\ref{eq.17}), the lower and upper boundaries of $\theta _r^{clear}(x,y)$ are obtained as:
\begin{equation}\label{eq.18}
\theta _r^{haze}(x,y) \le \theta _r^{clear}(x,y) \le \hat \theta _r^{clear}.
\end{equation}
Furthermore, inequality (\ref{eq.18}) can be converted into a linear interpolation of the two boundary values by
\begin{equation}\label{eq.19}
\theta _r^{clear}(x,y) = \theta _r^{haze}(x,y) \cdot (1 - w(x,y)) + \hat \theta _r^{clear} \cdot w(x,y),
\end{equation}
where  $w(x,y)$ is the interpolation coefficient with a range of $[0,1]$. Given values of $\theta _r^{haze}(x,y)$, we define $w(x,y)$ that varies with pixels as:
\begin{equation}\label{eq.20}
w(x,y) = \phi (\theta _r^{haze}(x,y)),
\end{equation}
where $\phi ( \cdot )$ is the weight function that returns the value of interpolation coefficient. For an unchanged order of ${\theta}(x,y)$, the result of Eq. (\ref{eq.19}) must meet the design goal 2). Indeed, we have the following property:

\begin{property}\label{property_1}
For every two pixels $({x_0},{y_0})$ and $({x_1},{y_1})$ with the order of $\theta _r^{haze}({x_0},{y_0}) \le \theta _r^{haze}({x_1},{y_1})$, the monotonically increasing weight function $\phi ( \cdot )$ is the sufficient condition for the results of Eq. (\ref{eq.19}) to satisfy $\theta _r^{clear}({x_0},{y_0}) \le \theta _r^{clear}({x_1},{y_1})$.
\end{property}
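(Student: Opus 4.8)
\emph{Proof proposal.} The plan is to reduce Property~\ref{property_1} to the monotonicity of a single real function of one variable. First I would substitute Eq.~(\ref{eq.20}) into Eq.~(\ref{eq.19}) to eliminate the pixel dependence, writing the restored value as $f(a) := a\,(1-\phi(a)) + \hat\theta_r^{clear}\,\phi(a) = a + \phi(a)\,(\hat\theta_r^{clear} - a)$, where $a$ stands for $\theta_r^{haze}(x,y)$. Because $\theta_r^{haze}(x,y) \le \hat\theta_r^{haze} \le \hat\theta_r^{clear}$ by inequality~(\ref{eq.17}), and because $w(x,y) = \phi(\theta_r^{haze}(x,y))$ has range $[0,1]$, the admissible arguments of $f$ lie in $[0,\hat\theta_r^{clear}]$ and $\phi$ takes values in $[0,1]$ there. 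Property~\ref{property_1} is then exactly the statement that $f$ is non-decreasing on this interval.

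Next I would take two admissible values $a_0 \le a_1$, corresponding to $\theta_r^{haze}(x_0,y_0) \le \theta_r^{haze}(x_1,y_1)$, and rewrite the difference $f(a_1) - f(a_0)$ as a sum of terms whose signs are transparent. The key algebraic step is the regrouping identity, obtained by adding and subtracting $\phi(a_0)(\hat\theta_r^{clear} - a_1)$:
\[
f(a_1) - f(a_0) = (a_1 - a_0)\bigl(1 - \phi(a_0)\bigr) + \bigl(\hat\theta_r^{clear} - a_1\bigr)\bigl(\phi(a_1) - \phi(a_0)\bigr).
\]
Each factor is then checked: $a_1 - a_0 \ge 0$ by the ordering hypothesis; $1 - \phi(a_0) \ge 0$ since $\phi$ takes values in $[0,1]$; $\hat\theta_r^{clear} - a_1 \ge 0$ by inequality~(\ref{eq.17}); and $\phi(a_1) - \phi(a_0) \ge 0$ precisely because $\phi$ is monotonically increasing and $a_0 \le a_1$. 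Hence $f(a_1) - f(a_0) \ge 0$, i.e. $\theta_r^{clear}(x_0,y_0) \le \theta_r^{clear}(x_1,y_1)$, which is the conclusion of Property~\ref{property_1}.

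I do not expect a serious obstacle: once the right regrouping is spotted the argument is a two-line computation. The only point requiring care is making explicit that the arguments fed to $\phi$ never exceed $\hat\theta_r^{clear}$, so that the factor $\hat\theta_r^{clear} - a_1$ is genuinely non-negative; this is exactly the content of inequality~(\ref{eq.17}) and should be cited when the regrouping is invoked. It is also worth remarking that the property is a one-directional implication — monotonicity of $\phi$ is sufficient but not claimed to be necessary — so no converse argument is needed, and the proof is complete with the sign check above.
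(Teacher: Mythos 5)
Your proposal is correct and takes essentially the same route as the paper: the paper decomposes the difference into the two non-negative pieces $(\hat\theta_r^{clear}-\theta_r^{haze}(x_0,y_0))\,(w(x_1,y_1)-w(x_0,y_0))$ (which it labels a rearrangement inequality, Eq.~(\ref{eq.23})) and $(\theta_r^{haze}(x_1,y_1)-\theta_r^{haze}(x_0,y_0))\,(1-w(x_1,y_1))$ (Eq.~(\ref{eq.24})), which is exactly your regrouping identity with the cross term added and subtracted at the other index. Your sign checks, including the explicit appeal to inequality~(\ref{eq.17}) for $\hat\theta_r^{clear}-a_1\ge 0$ and to the range $[0,1]$ of the interpolation coefficient, coincide with the paper's, so nothing is missing.
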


\begin{proof}	 
When the weight function $\phi ( \cdot )$  monotonically increases, from Eq. (\ref{eq.20}), the order of the interpolation coefficient can be obtained as:  
\begin{equation}\label{eq.21}
w({x_0},{y_0}) \le w({x_1},{y_1}).
\end{equation} 
From Eq. (\ref{eq.17}), we have
\begin{equation}\label{eq.22}
\theta _r^{haze}({x_0},{y_0}) \le \hat \theta _r^{clear}.
\end{equation}
According to Rearrangement Inequality, from inequalities (\ref{eq.21}) and (\ref{eq.22}), we obtain
\begin{equation}\label{eq.23}
	\begin{array}{l}
		\hat \theta _r^{clear} \cdot w({x_0},{y_0}) + \theta _r^{haze}({x_0},{y_0}) \cdot w({x_1},{y_1})\\
		\le \hat \theta _r^{clear} \cdot w({x_1},{y_1}) + \theta _r^{haze}({x_0},{y_0}) \cdot w({x_0},{y_0}).
	\end{array}
\end{equation}
Note that the range of $w({x_1},{y_1})$  is [0,1] and $\theta _r^{haze}({x_0},{y_0}) \le \theta _r^{haze}({x_1},{y_1})$, thus we have
\begin{equation}\label{eq.24}
\begin{array}{l}
	\theta _r^{haze}({x_0},{y_0}) \cdot (1 - w({x_1},{y_1}))\\
	\le \theta _r^{haze}({x_1},{y_1}) \cdot (1 - w({x_1},{y_1})).
\end{array}
\end{equation}
From  inequalities (\ref{eq.23}) and (\ref{eq.24}),  we obtain 
\begin{equation}\label{eq.25}
\begin{array}{l}
	\theta _r^{haze}({x_0},{y_0})\! \cdot\! (1 - w({x_0},{y_0}))\! +\! \hat \theta _r^{clear} \cdot w({x_0},{y_0})\\
	\le \theta _r^{haze}({x_1},{y_1})\! \cdot\! (1 - w({x_1},{y_1}))\! +\! \hat \theta _r^{clear} \cdot w({x_1},{y_1}).
\end{array}
\end{equation}
Substituting inequality (\ref{eq.25}) into Eq. (\ref{eq.19}), the expected order of $\theta _r^{clear}({x},{y})$ in the dehazing result can be obtained as:
\begin{equation}\label{eq.26}
\theta _r^{clear}({x_0},{y_0}) \le \theta _r^{clear}({x_1},{y_1}).
\end{equation}
\end{proof}

Since $\theta _r^{haze}(x,y)$ value varies from hazy images, normalization operation is adopted to unify $\theta _r^{haze}(x,y)$, which can be expressed as:
\begin{equation}\label{eq.27}
   z(x,y) = \frac{{\theta _r^{haze}(x,y) - min(\theta _r^{haze}(x,y))}}{{max(\theta _r^{haze}(x,y)) - min(\theta _r^{haze}(x,y))}}.
\end{equation}

According to proposition \ref{property_1}, all the monotonically increasing functions can be selected as weight function $\phi ( \cdot )$; however, the dehazing performance of these functions are different. Consequently, three functions with different shapes (convex function, linear function, and concave function) are discussed, which can be described as:
\begin{equation}\label{eq.28}
{\phi _1}(z) = z \cdot (2 - z),
\end{equation}
\begin{equation}\label{eq.29}
	{\phi _2}(z) = z, 
\end{equation}
\begin{equation}\label{eq.30}
	{\phi _3}(z) = z^2.
\end{equation}
Eqs. (\ref{eq.28})-(\ref{eq.30})  are visualized in Fig. \ref{fig_function_choice_1}, and the dehazing results with a rough $\hat \theta _r^{clear}$ (set as 1.2 times of $\hat \theta _r^{haze}$) using these functions are presented in Figs. \ref{fig_function_choice_2}(b)-(d). It is seen that a large value produced by function ${\phi _1}$ helps remove the dense haze, yet appears to amplify the noise in distant regions (see the red rectangles in Fig. \ref{fig_function_choice_2}(b)). Meanwhile, a small value produced by function ${\phi _3}$ contributes to suppress the noise amplification, but introduces haze residues in dense haze regions (see the blue rectangles in Fig. \ref{fig_function_choice_2}(d)). By comparison, function ${\phi _2}$ achieves a preferable balance between haze removal and noise amplification (see Fig. \ref{fig_function_choice_2}(c)). Therefore, in this work, function ${\phi _2(\cdot)}$ is adopted as the weight function $\phi ( \cdot )$.

\subsection {Global Optimization Using Boundary Constraints}

The global parameter $\hat \theta _r^{clear}$, which varies from images, has a significant impact on the restoration quality. Figs. \ref{fig_global_optimization}
(b)-(e) show the dehazing results generated by using different $\hat \theta _r^{clear}$ values, from which, we have the following observation. On the one hand, a larger $\hat \theta _r^{clear}$ value helps increase the contrast of the whole image. On the other hand, over-saturation occurs in some regions when $\hat \theta _r^{clear}$ value is too large. Accordingly, $\hat \theta _r^{clear}$ should be finely solved from the whole image. Note that the dehazing results must be within the range of [0, 1], which yields a boundary for the transmission. According to Eq. (\ref{eq.5}), the transmission value corresponding dehazing result $\emph{J}$ reaching the boundary is
\begin{equation}\label{eq.31}
	{t_b}(x,y)\!=\!\mathop {max}\limits_{c \in \{ r,g,b\} }\!(max(\frac{{{H^c}(x,y)- {A^c}}}{{0 - {A^c}}},\frac{{{H^c}(x,y)-{A^c}}}{{1- {A^c}}})\!).
\end{equation}
Substituting Eqs. (\ref{eq.31}), (\ref{eq.19}), and (\ref{eq.20}) into Eq. (\ref{eq.6}), the corresponding $\hat \theta _r^{clear}$ value for each pixel reaching the boundary can be obtained as:
\begin{equation}\label{eq.32}
\begin{split}
		 \theta _b^{clear}(x,y) = {\left( {{t_b}(x,y) \cdot {\phi _2}(z(x,y))} \right)^{ - 1}} \cdot \theta _r^{haze}(x,y)\\
		\qquad \qquad \qquad  \cdot \left( {1 - {t_b}(x,y) + {t_b}(x,y) \cdot {\phi _2}(z(x,y))} \right).
\end{split}
\end{equation}

By sorting the values obtained from Eq. (\ref{eq.32}), the $\hat \theta _r^{clear}$ values corresponding to different degrees of boundary overflow can be expressed as: 
\begin{equation}\label{eq.33}
{\theta _\varepsilon } = SortP( \theta _b^{clear},\varepsilon ),
\end{equation}
where $SortP(\cdot)$ returns the percentiles of the elements in $ \theta _b^{clear}$ for the percentages $\varepsilon$. The examples of this function are given in Fig. \ref{fig_global_optimization}(f). In this work, we set $\varepsilon$ as 0.02 to make a trade-off between high contrast and low information loss, which corresponds to the dehazing results with 2\% pixels reaching the boundary of [0,1]. In addition, to meet the design criterion 1),  $\hat \theta _r^{clear}$  must be larger than  $\hat \theta _r^{haze}$. Thus, the global parameter is finally set as:
\begin{equation}\label{eq.34}
\hat \theta _r^{clear} = max({\theta _\varepsilon },\hat \theta _r^{haze}).
\end{equation}
The dehazing results using  $\hat \theta _r^{clear}$ generated by Eq. (\ref{eq.34}) are shown in Fig. \ref{fig_global_optimization}(g). It can be observed that global optimization can reliably improve the dehazing performance while avoid over-saturation, as expected.

\begin{figure}
	\begin{center}
		\includegraphics[width=2in]{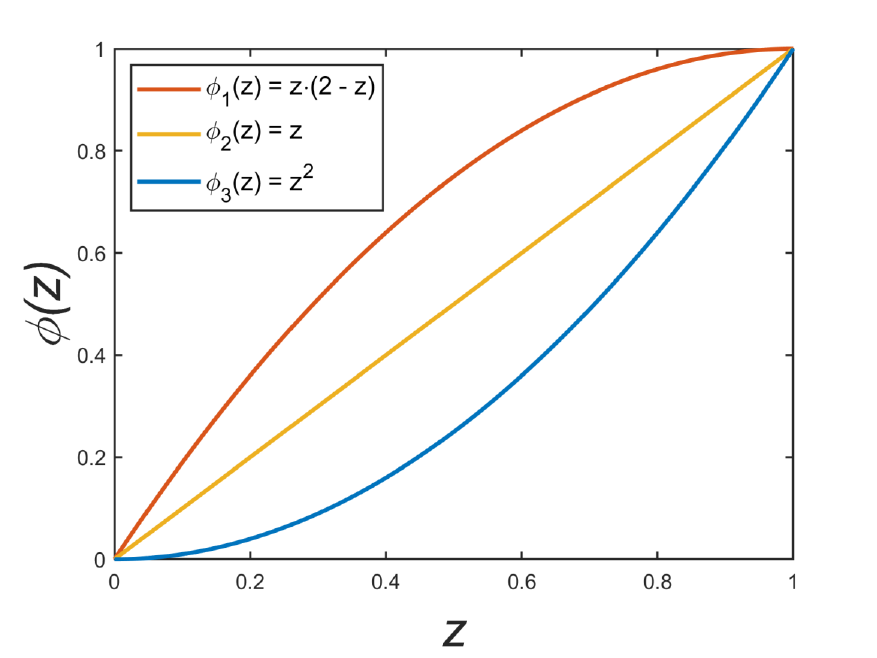}
		\caption{Visualization of Eqs. (\ref{eq.28})-(\ref{eq.30}).}\label{fig_function_choice_1}
	\end{center}
\end{figure}
\begin{figure}
	\begin{center}
		\includegraphics[width=3.5in]{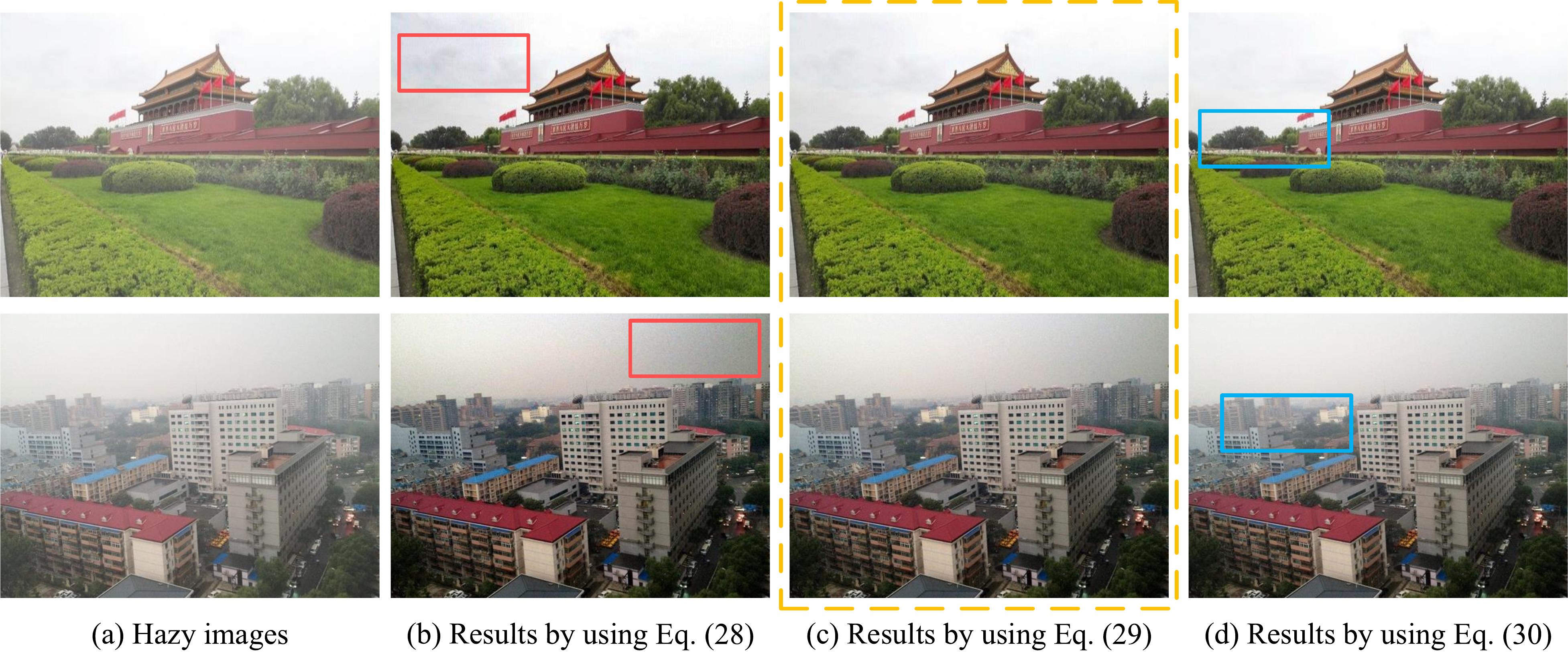}
		\caption{Comparison of dehazing results by using different weight functions.}\label{fig_function_choice_2}
	\end{center}
\end{figure}

\begin{figure*}
	\begin{center}
		\includegraphics[width=7in]{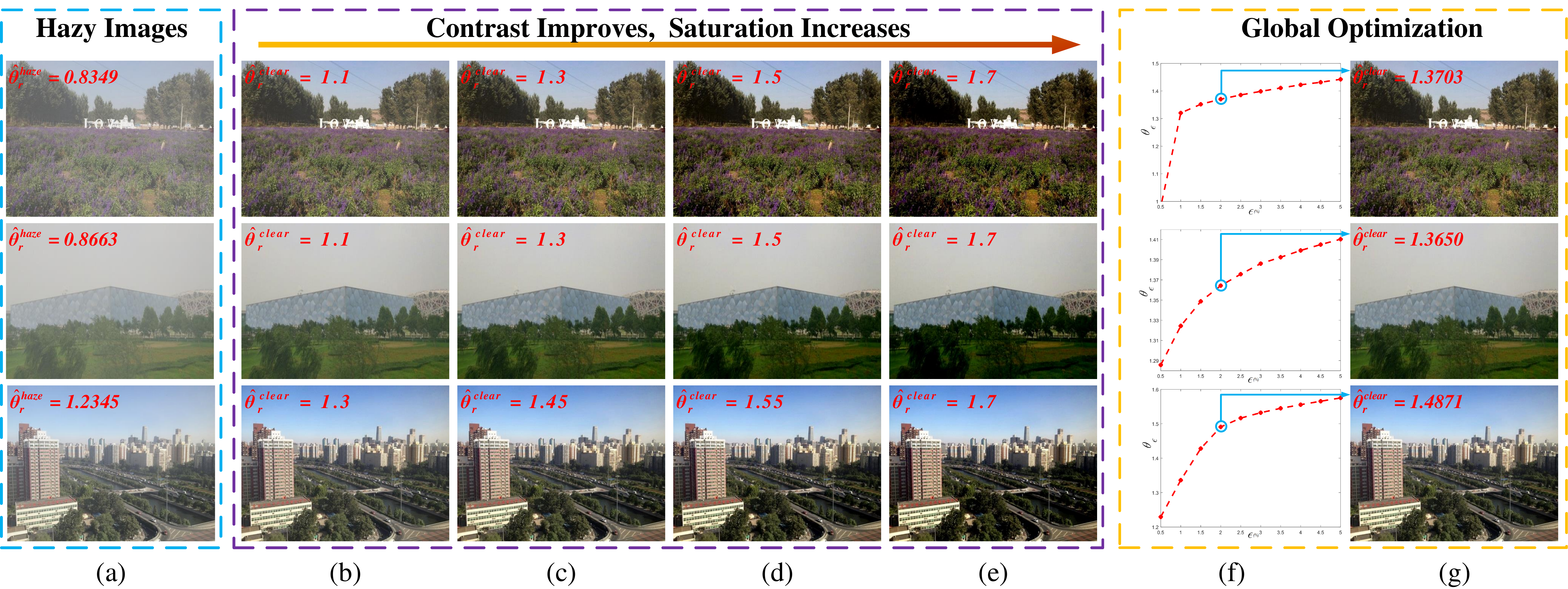}
		\caption{Demonstration of the impact of $\hat \theta _r^{clear}$. \textbf{(a):} Hazy images. \textbf{(b)-(e):} Dehazing results with increasing $\hat \theta _r^{clear}$. \textbf{(f)-(g):} Global optimization process and corresponding results. }\label{fig_global_optimization}
	\end{center}
\end{figure*}
\begin{figure}
	\begin{center}
		\includegraphics[width=3.2in]{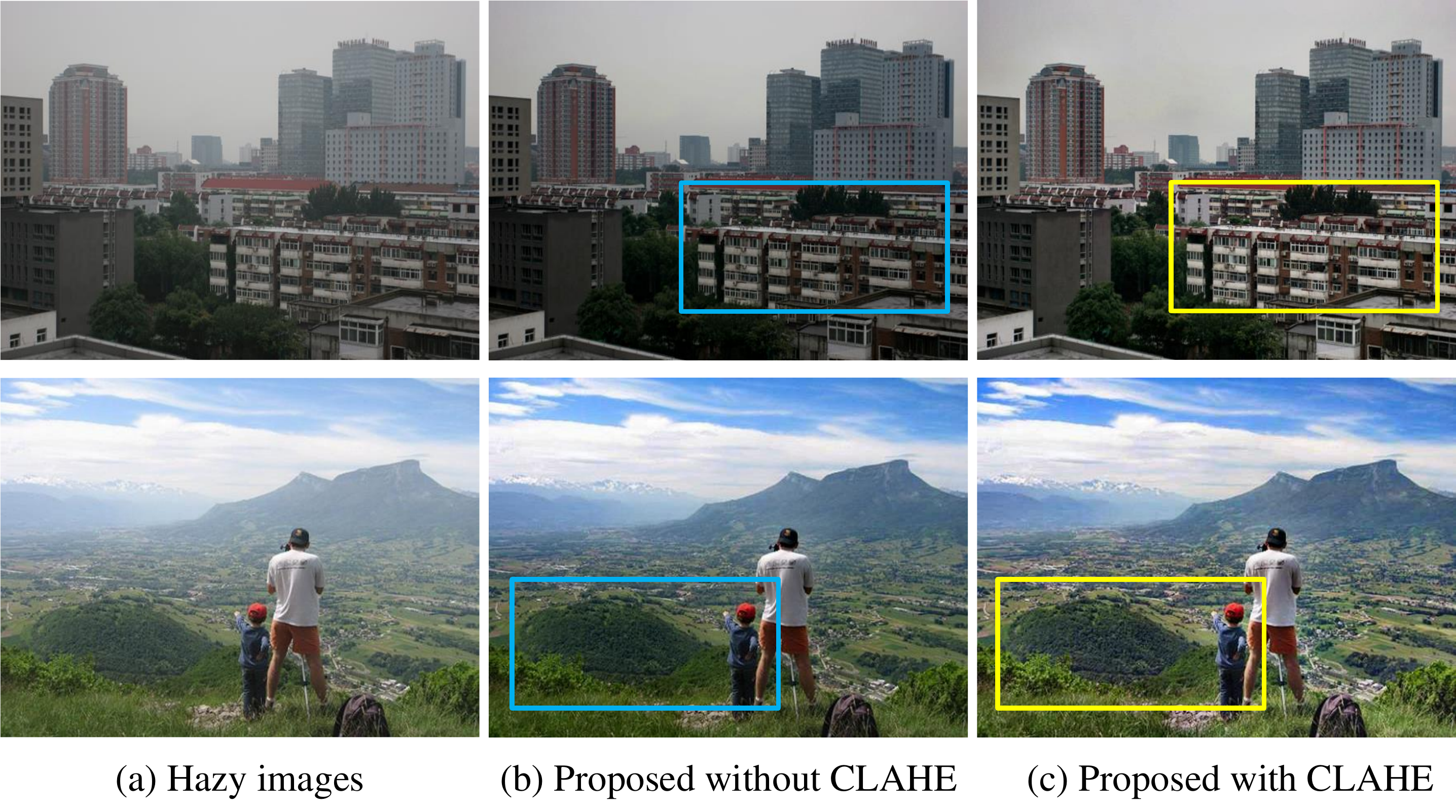}
		\caption{Dehazing results in scenes with dim lighting.}\label{fig_clahe}
	\end{center}
\end{figure}
\begin{figure*}
	\begin{center}
		\includegraphics[width=7in]{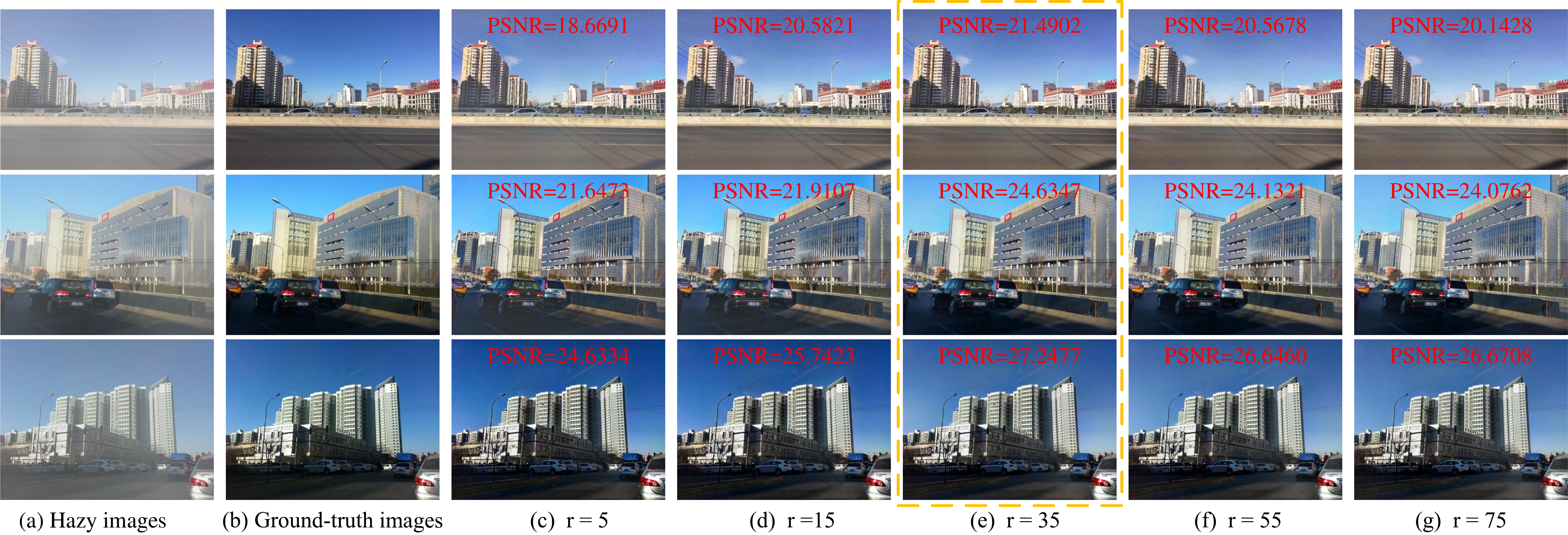}
		\caption{Dehazing results obtained by using different patch sizes.}\label{fig_patchsize}
	\end{center}
\end{figure*}
\begin{figure*}
	\begin{center}
		\includegraphics[width=7in]{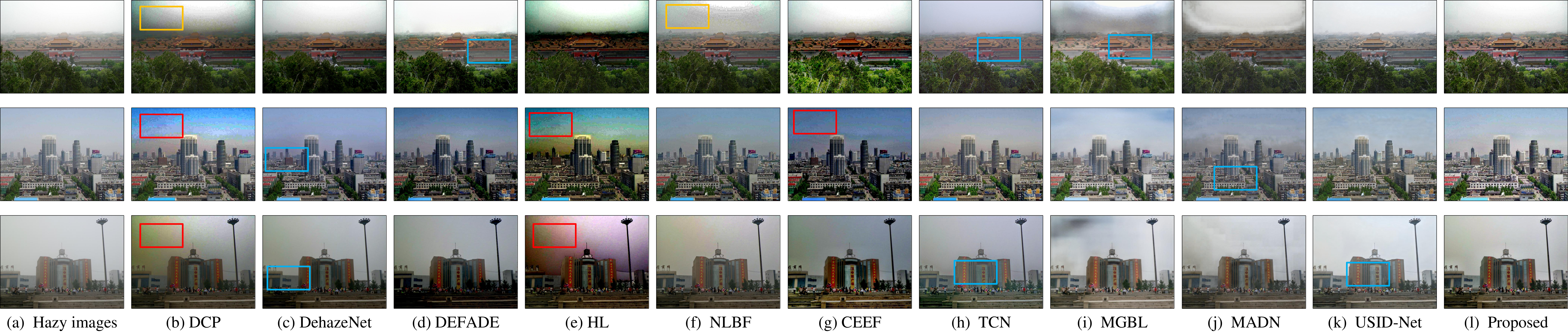}
		\caption{Comparison of the dehazing results in hazy images with large sky regions.}\label{fig_sky_regions}
	\end{center}
\vspace{-3mm}
\end{figure*}

\subsection {Estimation for Atmospheric Light}

Methods for estimating atmospheric light have been proposed in \cite{darkchannel}, \cite{cap}, and \cite{Tang}. The estimated values yielded by these approaches are barely different in most cases because they all estimate atmospheric light from the distant pixels, which usually correspond to the sky regions. In this work, atmospheric light is estimated by using the method in \cite{Tang}.

\subsection {Image Restoration}
Once ${\theta _r^{clear}}(x,y)$ value is obtained, the transmission can be directly obtained from Eq. (\ref{eq.6}), which can be expressed as:
\begin{equation}\label{eq.35}
	{t_r}(x,y) = \frac{{\theta _r^{haze}(x,y)}}{{\theta _r^{clear}(x,y)}}.
\end{equation}
Previously, Eq. (\ref{eq.12}) assumes that depth is constant in a local patch, which may lead to blocking artifacts. Thus, guided filter is employed to smooth the estimated transmission as:
\begin{equation}\label{eq.36}
	{t_g}(x,y) = G({t_r}(x,y)),
\end{equation}
where $G( \cdot )$ is the guided filter operator. Substituting Eq. (\ref{eq.36}) and atmospheric light ${\emph{A}}$ into Eq. (\ref{eq.5}), the dehazing results can be obtained. Note that the dehazing process proposed above is based on the atmospheric scattering model, which assumes that atmospheric light is globally constant. However, this assumption may fail in real world scenes with dim lighting and thus lead to local dark illumination \cite{local_dark_1,local_dark_2}. Therefore, contrast limited adaptive histogram equalization (CLAHE) \cite{CLAHE} is implemented in the obtained dehazing results to compensate for this drawback. The dehazing results with/without CLAHE are shown in Figs. \ref{fig_clahe}(b) and \ref{fig_clahe}(c), respectively. For clarity, the detailed procedure of the proposed method is given in Algorithm 1.
\begin{table}
	\begin{center}
		\setlength{\tabcolsep}{0.005mm}{
			\renewcommand{\arraystretch}{1.2}
			\normalsize
			\begin{tabular}{|c|c|c|c|c|c|}
				\bottomrule[1.5pt]
				\multicolumn{6}{l}{\textbf{Algorithm 1:} Overall Procedure of the Proposed Method}        \\
				\bottomrule
				\multicolumn{6}{l}{\textbf{Input:} Hazy image $\boldsymbol{I}$.}  \\
				\multicolumn{6}{l}{\textbf{Parameter setting:} $r=35$, $\varepsilon=0.02$.}\\
				\multicolumn{6}{l}{\textbf{Begin}}	\\
				\multicolumn{6}{l}  {\quad  Step 1: Estimate ${\emph{A}}$ from H via \cite{Tang}.}      \\
				\multicolumn{6}{l}  {\quad Step 2: Calculate $\theta _r^{haze}(x,y)$ using Eqs. (\ref{eq.3}) and (\ref{eq.12}).}\\      
				\multicolumn{6}{l}{\quad  Step 3: Compute  $z(x,y)$ using Eq. (\ref{eq.27}).}      \\
				\multicolumn{6}{l}{\quad Step 4: Calculate $w(x,y)$ using Eqs. (\ref{eq.20}) and (\ref{eq.29}).} \\
				\multicolumn{6}{l}{\quad Step 5: Determine  $\hat \theta _r^{clear}$ using Eq. (\ref{eq.34}).}    \\
				\multicolumn{6}{l}{\quad Step 6: Estimate ${\theta _r^{clear}}(x,y)$ using Eq. (\ref{eq.19}).}     \\
				\multicolumn{6}{l}{\quad Step 7: Calculate  ${t_g}(x,y)$ using Eqs. (\ref{eq.35}) and (\ref{eq.36}).}          \\
				\multicolumn{6}{l}{\quad Step 8: Restore the haze-free image ${\emph{J}}$ using Eq. (\ref{eq.5}).}       \\
				\multicolumn{6}{l}{\quad Step 9: Apply CLAHE \cite{CLAHE}.}       \\
				\multicolumn{6}{l}{\textbf{End}} \\
				\multicolumn{6}{l}{\textbf{Output:} Restored image $\boldsymbol{J}$.}  \\                        
				\bottomrule
		\end{tabular}}
	\end{center}
\end{table}

\section{EXPERIMENTS }

In this section, we provide detailed analyses and comparison
experiments to facilitate the comprehensive evaluation of the
proposed method. For a comprehensive comparison, the compared methods include five prior-based methods (DCP\cite{darkchannel}, DEFADE\cite{defade}, HL \cite{hl}), NLBF\cite{nlbf} and CEEF \cite{ceef}) and five learning-based methods (DehazeNet \cite{dehazenet}, TCN \cite{tcn}, MGBL \cite{mgbl}, MADN\cite{madn}, and USID-Net\cite{usid}). The testing scenes consist of real benchmark hazy images and common synthetic hazy datasets (O-HAZE \cite{ohaze}, I-HAZE \cite{ihaze}, HSTS\cite{RESIDE}, and SOTS \cite{RESIDE} datasets). For the evaluation of real hazy images without corresponding ground-truth, we employ the non-reference metrics, i.e., the quality of the contrast restoration $\bar r$ \cite{metric} and Fog Aware Density Evaluator (FADE\cite{defade}). For the evaluation in benchmark datasets with corresponding datasets, we apply the common reference metrics, i.e., PSNR, SSIM, and CIEDE2000. Additionally, based on the recent dataset BEDDE \cite{bedde}, we employ the Visibility Index (VI) for visibility restoration evaluation and Realness Index (RI) for realness restoration evaluation.
The experiments are implemented on an Intel Core (TM) i5-6500 CPU@ 3.2GHz, and the parameters of each compared dehazing method are set according to corresponding references.

\begin{figure*}
	\begin{center}
		\includegraphics[width=7in]{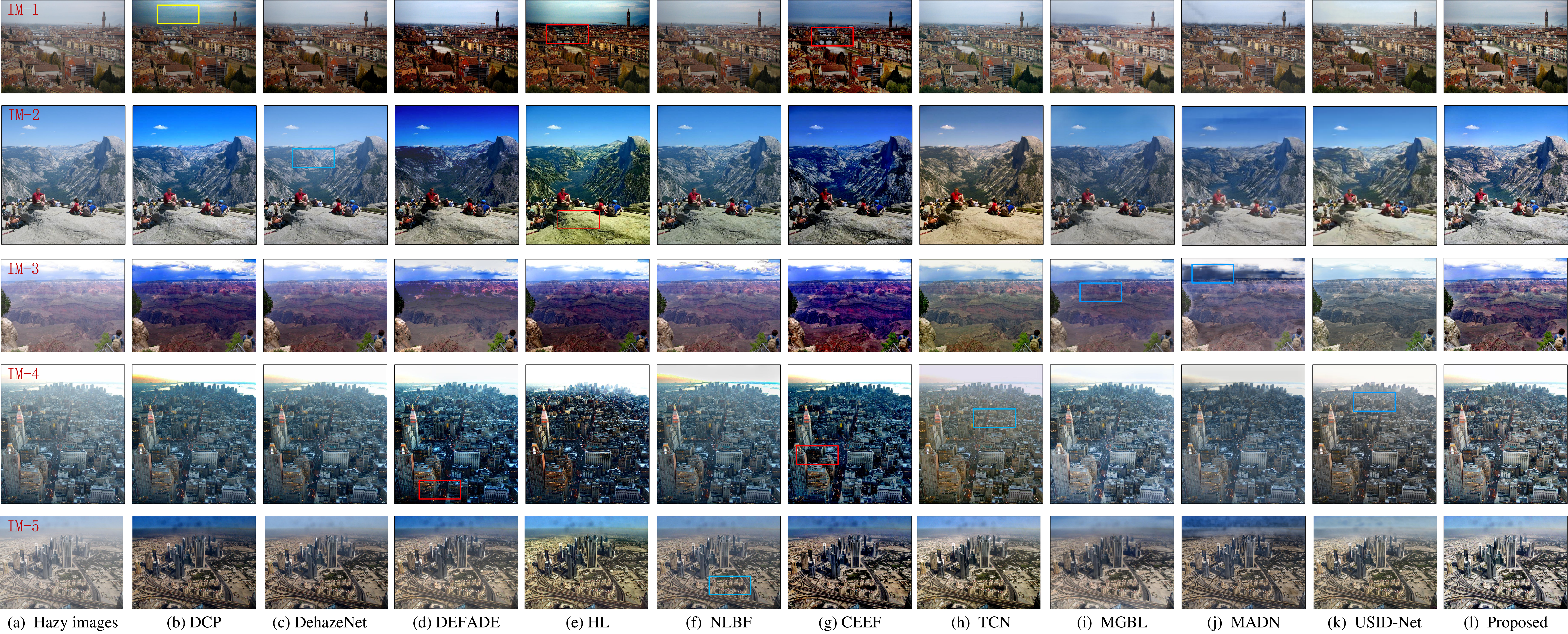}
		\caption{Comparison of dehazing results in real-world benchmark hazy images.}\label{fig_real_world}
	\end{center}
\end{figure*}

\begin{table*}[]
	\begin{center}
		\caption{\textsc{$\bar r$ and FADE Values in the Dehazing Results of Fig. \ref{fig_real_world}}}
		\label{tab_real_world}
		\setlength{\tabcolsep}{0.3mm}{
			\renewcommand{\arraystretch}{1.5}
			\begin{tabular}{c|ccccccccccccc}
				\hline
				{\color[HTML]{333333} Metric} & Image                       & Hazy   & DCP\cite{darkchannel} & DehazeNet\cite{dehazenet} & DEFADE\cite{defade} & HL\cite{hl}        & NLBF\cite{nlbf} & CEEF\cite{ceef}                    & TCN\cite{tcn}                    & MGBL\cite{mgbl} & MADN\cite{madn} & USID-Net\cite{usid} & Proposed       \\ \hline
				& {\color[HTML]{333333} IM-1} & 1      & 1.2125    & 1.1540          & 1.3597       & 1.6288          & 1.1855     & 1.6663                        & {\color[HTML]{333333} 1.4820} & 1.4227     & 1.2874     & 1.4114         & \textbf{1.7563} \\
				& IM-2                        & 1      & 1.3596    & 1.1242          & 1.4440       & 1.6653          & 1.3405     & {\color[HTML]{333333} 1.5608} & {\color[HTML]{333333} 1.2720} & 0.9671     & 1.0513     & 1.3620         & \textbf{1.7589} \\
				& IM-3                        & 1      & 1.4733    & 1.1900          & 1.4808       & 1.7559          & 1.5989     & 2.2251                        & 1.6718                        & 1.1155     & 1.3210     & 1.6911         & \textbf{2.2431} \\
				& IM-4                        & 1      & 1.1610    & 1.1414          & 1.4920       & 1.6172          & 1.2932     & 1.5352                        & 1.3129                        & 1.6434     & 1.0482     & 1.1514         & \textbf{1.7618} \\
				\multirow{-5}{*}{$\bar r$}           & IM-5                        & 1      & 1.6283    & 1.4784          & 2.0084       & 2.1581          & 1.4591     & 2.2375                        & 1.9625                        & 1.4112     & 1.6303     & 1.8283         & \textbf{2.3759} \\ \hline
				& IM-1                        & 0.8475 & 0.2867    & 0.3628          & 0.1999       & \textbf{0.1546} & 0.4412     & \textbf{0.1546}               & 0.4801                        & 0.8376     & 0.5489     & 0.6111         & 0.3420          \\
				& IM-2                        & 0.7524 & 0.4207    & 0.5948          & 0.3005       & 0.2883          & 0.4174     & \textbf{0.2084}               & 0.4306                        & 0.5938     & 0.4399     & 0.4563         & 0.4266          \\
				& IM-3                        & 1.4376 & 0.3958    & 0.7080          & 0.6485       & 0.3140          & 0.5734     & \textbf{0.2044}               & 0.5525                        & 0.7448     & 0.7059     & 0.8487         & 0.3859          \\
				& IM-4                        & 1.4272 & 0.2320    & 0.2528          & 0.1712       & 0.1140          & 0.3205     & \textbf{0.1106}               & 0.3746                        & 0.4322     & 0.4485     & 0.4219         & 0.2393          \\
				\multirow{-5}{*}{FADE}        & IM-5                        & 1.0355 & 0.2239    & 0.3545          & 0.1577       & 0.2296          & 0.4289     & \textbf{0.1387}               & 0.2737                        & 0.6397     & 0.2848     & 0.4905         & 0.2890          \\ \hline
		\end{tabular}}
	\end{center}
\end{table*}
\subsection {Determination of Patch Size}
For the patch size $r$ in Eq. (\ref{eq.12}), on the one hand, a larger patch size contributes to a more effective extraction for depth order, as shown in Fig. \ref{fig_demonstration}(d). On the other hand, the assumption that local pixels share the same depth may fail when the patch size is too large, leading to performance decline. Fig. \ref{fig_patchsize} gives three examples from the Realistic Single Image Dehazing (RESIDE) dataset \cite{RESIDE} by using different $r$ values, and the peak signal to noise ratio (PSNR) is adopted for performance evaluation. It is observed that $r=35$ provides the higher scores, which implies its better ability to recover real structure and details. Therefore, in the following experiments, $r$ is set as 35. It should be pointed out that $r=35$ may not be optimal for all images, but appears to achieve satisfactory results in most scenes.

\subsection {Comparison in Hazy Images With Large Sky Regions}

Image dehazing is more challenging in hazy images with large sky regions, in which amplified noise and distortion colors are prone to severely damage the restoration quality. Fig. 11 gives the visual comparison of different methods in hazy images with large sky regions.  

As can be observed in Figs. \ref{fig_sky_regions}(b)-(k), some methods obtain a satisfactory haze removal by introducing distorted colors (see the red rectangles in Fig. \ref{fig_sky_regions}) or amplified noise (see the yellow rectangles in Fig. \ref{fig_sky_regions}). Correspondingly, some methods obtain a high color fidelity with the cost of producing many haze residues (see the blue rectangles in Fig. \ref{fig_sky_regions}), resulting in a limited restoration quality. In comparison, the proposed method can achieve impressive haze removal while performing remarkable noise suppression and color preservation, thereby bringing significant visibility improvement, as shown in Fig. \ref{fig_sky_regions}(l). Moreover, benefiting from the continuity of the proposed transformation function model, the restored transition between sky and non-sky regions in the dehazing results is also considerably smooth.

\subsection {Qualitative Comparison in Real-World Benchmark Images}
For qualitative comparison, five different types of benchmark images collected from the real world are adopted, which are presented in Fig. \ref{fig_real_world}(a), and the dehazing results of different methods are shown in Figs. \ref{fig_real_world}(b)-(l).

As shown in Fig. \ref{fig_real_world}(b), DCP \cite{darkchannel} is efficient in removing haze in most scenes. However, the noise amplification in sky regions cannot be neglected, as shown in the yellow rectangle in Fig. \ref{fig_real_world}(b). DehazeNet \cite{dehazenet} performs well in mist haze, but suffers from haze residues when facing dense haze (see the blue rectangles in Fig. \ref{fig_real_world}(c)). DEFADE \cite{defade} improves the ability to restore details. However, the recovered color is too dim in low-light regions, as shown in the red rectangles in Fig. \ref{fig_real_world}(d). HL \cite{hl} can remove haze thoroughly. However, over-enhancement is also introduced (see the red rectangles in Fig. \ref{fig_real_world}(e)). Due to the fine control of the transmission boundary, NLBF \cite{nlbf} successfully suppresses the artifacts but results in a limited dehazing performance, as shown in the blue rectangle in Fig. \ref{fig_real_world}(f). Benefiting from the improvement of fusion strategy, CEEF \cite{ceef} achieves better visibility but exhibits over-saturation in some regions (see the red rectangles in Fig. \ref{fig_real_world}(g)). With the refinement for dark and shadow regions, TCN \cite{tcn} obtains dehazing results with increased brightness. Regrettably, the dehazing performance is not satisfactory when the estimated regions are not inaccurate, which can be seen in the blue rectangles in Fig. \ref{fig_real_world}(h). MGBL \cite{mgbl} effectively avoids the introduction of color distortion, but fails to obtain improved visibility in dense haze, as shown in the blue rectangles in Fig. \ref{fig_real_world}(i). A similar situation can also be observed in the dehazing results of MADN \cite{madn} and USID-Net \cite{usid}, in which color fidelity is obtained at the cost of producing many haze residues, resulting in impaired visibility, which can be seen in the blue rectangles in  Figs. \ref{fig_real_world}(j) and \ref{fig_real_world}(k), respectively. Comparatively, the proposed method better reveals potential structure and vivid color without the production of undesired artifacts and color distortion.

\begin{table*}[]
	\begin{center}
		\caption{\textsc{Quantitative Comparison Between the Proposed Method and Other State-of-the-Art Methods on O-HAZE, I-HAZE, HSTS, and SOTS (outdoor set) Datasets Using PSNR, SSIM, and CIEDE2000}}
		\label{tab_synthetic}
		\setlength{\tabcolsep}{0.3mm}{
			\renewcommand{\arraystretch}{1.5}
			\begin{tabular}{c|cccccccccccc}
				\hline
				Dataset                  & Metric    & DCP\cite{darkchannel} & DehazeNet\cite{dehazenet} & DEFADE\cite{defade} & HL\cite{hl}        & NLBF\cite{nlbf} & CEEF\cite{ceef}                    & TCN\cite{tcn}                    & MGBL\cite{mgbl} & MADN\cite{madn} & USID-Net\cite{usid} & Proposed                                \\ \hline
				& PSNR      & 16.3433 & 15.5106                                 & 15.6069 & 15.6912 & 16.7201                                 & 13.9424 & 14.7832                                & 14.9930                                 & {\color[HTML]{34CDF9} \textbf{17.2146}} & 15.5918                                & {\color[HTML]{FE0000} \textbf{17.3147}} \\
				& SSIM      & 0.4794  & 0.4497                                  & 0.3817  & 0.5430  & 0.4886                                  & 0.2883  & {\color[HTML]{34CDF9} \textbf{0.5477}} & 0.3969                                  & 0.4096                                  & 0.5240                                  & {\color[HTML]{FE0000} \textbf{0.5847}}  \\
				\multirow{-3}{*}{O-HAZE} & CIEDE2000 & 17.0019 & 16.6517                                 & 19.7347 & 17.1270 & {\color[HTML]{34CDF9} \textbf{15.0263}} & 22.3689 & 17.1086                                & 17.7804                                 & 16.6940                                 & 16.0474                                 & {\color[HTML]{FE0000} \textbf{14.1502}} \\ \hline
				& PSNR      & 13.1961 & 15.8003                                 & 16.0261 & 15.6530 & 16.5040                                 & 15.5721 & 16.6034                                & {\color[HTML]{34CDF9} \textbf{16.8592}} & 15.4370                                 & 16.5037                                 & {\color[HTML]{FE0000} \textbf{17.9898}} \\
				& SSIM      & 0.5061  & 0.5817                                  & 0.5978  & 0.6088  & {\color[HTML]{34CDF9} \textbf{0.6378}}  & 0.5566  & 0.5728                                 & 0.6262                                  & 0.6067                                  & 0.5670                                  & {\color[HTML]{FE0000} \textbf{0.6934}}  \\
				\multirow{-3}{*}{I-HAZE} & CIEDE2000 & 18.5827 & 14.0629                                 & 14.1285 & 14.5911 & 12.7354                                 & 15.4363 & 14.2158                                & {\color[HTML]{34CDF9} \textbf{12.6508}} & 14.6021                                 & 13.3430                                 & {\color[HTML]{FE0000} \textbf{11.0443}} \\ \hline
				& PSNR      & 17.1887 & {\color[HTML]{34CDF9} \textbf{24.5041}} & 17.3229 & 17.6170 & 18.8010                                 & 16.2349 & 19.8511                                & 18.9224                                 & 18.9471                                 & {\color[HTML]{FE0000} \textbf{25.5120}} & 22.3724                                 \\
				& SSIM      & 0.8089  & {\color[HTML]{FE0000} \textbf{0.9176}}  & 0.7843  & 0.7917  & 0.8428                                  & 0.7124  & 0.7692                                 & 0.8419                                  & 0.8661                                  & 0.8462                                 & {\color[HTML]{34CDF9} \textbf{0.8855}}  \\
				\multirow{-3}{*}{HSTS}   & CIEDE2000 & 10.1881 & {\color[HTML]{FE0000} \textbf{4.9260}}  & 11.6041 & 10.4795 & 8.4251                                  & 11.8518 & 9.4623                                 & 8.8201                                  & 8.5953                                  & {\color[HTML]{34CDF9} \textbf{5.3790}}  & 6.5447                                  \\ \hline
				& PSNR      & 17.5736 & {\color[HTML]{34CDF9} \textbf{22.7229}} & 18.0681 & 18.4898 & 20.0290 & 15.9704 & 19.8904 & 18.2478 & 19.0631                                & {\color[HTML]{FE0000} \textbf{23.8092}} & 22.0602                                 \\
				& SSIM      & 0.8143  & 0.8578                                  & 0.8016  & 0.8298  & {\color[HTML]{34CDF9} \textbf{0.8711}}  & 0.6834  & 0.7322  & 0.8379  & 0.8621 & 0.8174                                  & {\color[HTML]{FE0000} \textbf{0.8922}}  \\
				\multirow{-3}{*}{SOTS}   & CIEDE2000 & 10.8241 & {\color[HTML]{FE0000} \textbf{5.8634}}  & 10.2653 & 9.8122  & 7.3395  & 12.3574 & 9.6257  & 9.6257  & 8.4707                                 & 6.7435                                  & {\color[HTML]{34CDF9} \textbf{6.6182}}                                  \\ \hline		
		\end{tabular}}
	\end{center}
\end{table*}

\begin{table*}[]
	\begin{center}
		\caption{\textsc{Quantitative Comparison Between the Proposed Method and Other State-of-the-Art Methods on BeDDE Dataset Using VI and RI}}
		\label{tab_bedde}
		\setlength{\tabcolsep}{0.83mm}{
			\renewcommand{\arraystretch}{1.5}
			\begin{tabular}{c|ccccccccccc}
				\hline
				Metric & DCP\cite{darkchannel} & DehazeNet\cite{dehazenet} & DEFADE\cite{defade} & HL\cite{hl}        & NLBF\cite{nlbf} & CEEF\cite{ceef}                    & TCN\cite{tcn}                    & MGBL\cite{mgbl} & MADN\cite{madn} & USID-Net\cite{usid} & Proposed                                    \\ \hline
				VI     & {\color[HTML]{FE0000} \textbf{0.9111}} & 0.8902                                 & ——     & 0.8278 & 0.8910 & 0.8736  & 0.8960 & 0.8672 & 0.8495 & 0.8521 & {\color[HTML]{34CDF9} \textbf{0.9012}} \\ \hline
				RI     & 0.9654                                 & {\color[HTML]{34CDF9} \textbf{0.9718}} & ——     & 0.9557 & 0.9702 & 0.96709 & 0.9617 & 0.9630 & 0.9625 & 0.9622 & {\color[HTML]{FE0000} \textbf{0.9725}} \\ \hline
		\end{tabular}}
	\end{center}
\end{table*}

\begin{table*}[]
	\begin{center}
		\caption{\textsc{Run Time on Various Resolution Images (Unit$:$ Second)}}
		\label{tab_runtime}
		\setlength{\tabcolsep}{0.68mm}{
			\renewcommand{\arraystretch}{1.5}
			\begin{tabular}{c|ccccccccccc}
				\hline
				& DCP\cite{darkchannel} & DehazeNet\cite{dehazenet} & DEFADE\cite{defade} & HL\cite{hl}        & NLBF\cite{nlbf} & CEEF\cite{ceef}                    & TCN\cite{tcn}                    & MGBL\cite{mgbl} & MADN\cite{madn} & USID-Net\cite{usid} & Proposed                              \\
				\multirow{-2}{*}{Resolution} & Matlab   & Matlab    & Matlab   & Matlab  & Matlab                                 & Matlab                                 & Tensorflow & Pytorch                                 & Pytorch  & Pytorch  & Matlab                                 \\ \hline
				400*600                      & 14.3903  & 2.0447    & 12.8527  & 4.8452  & 0.9234                                 & {\color[HTML]{34CDF9} \textbf{0.8639}} & 1.2504     & 1.7193                                  & 4.2733   & 3.6345   & {\color[HTML]{FE0000} \textbf{0.1939}} \\
				600*800                      & 27.5310  & 4.2538    & 24.1943  & 6.1620  & {\color[HTML]{34CDF9} \textbf{1.7325}} & 1.8117                                 & 2.4525     & 2.4534                                  & 9.8045   & 7.3815   & {\color[HTML]{FE0000} \textbf{0.3647}} \\
				800*1000                     & 21.0841  & 6.6924    & 40.2225  & 7.8164  & {\color[HTML]{34CDF9} \textbf{2.6985}} & 2.8627                                 & 4.2214     & 3.1890                                  & 18.0214  & 12.1151  & {\color[HTML]{FE0000} \textbf{0.5571}} \\
				1000*1200                    & 80.1020  & 10.3083   & 60.7340  & 10.9404 & {\color[HTML]{34CDF9} \textbf{3.9942}} & 4.2601                                 & 6.2811     & 4.1947                                  & 23.2699  & 17.6227  & {\color[HTML]{FE0000} \textbf{0.8722}} \\
				1200*1600                    & 165.1350 & 16.1073   & 145.8779 & 13.0515 & {\color[HTML]{333333} 6.7551}          & 7.0218                                 & 10.0010    & {\color[HTML]{34CDF9} \textbf{6.4712}}  & 38.6203  & 29.0681  & {\color[HTML]{FE0000} \textbf{1.4549}} \\
				1600*2000                    & ——    & 28.1794   & ——     & 22.5676 & 11.0415                                & 11.3232                                & 16.6101    & {\color[HTML]{34CDF9} \textbf{7.2503}}  & 64.7941  & 50.0431  & {\color[HTML]{FE0000} \textbf{2.2896}} \\
				2000*3000                    & ——     & 65.8699   & ——     & 45.3395 & 20.4891                                & 21.8789                                & 30.9844    & {\color[HTML]{34CDF9} \textbf{10.4258}} & 140.9155 & 93.9108  & {\color[HTML]{FE0000} \textbf{4.2049}} \\ \hline
		\end{tabular}}
	\end{center}
\end{table*}

\begin{figure*}[]
	\begin{center}
		\includegraphics[width=6.6in]{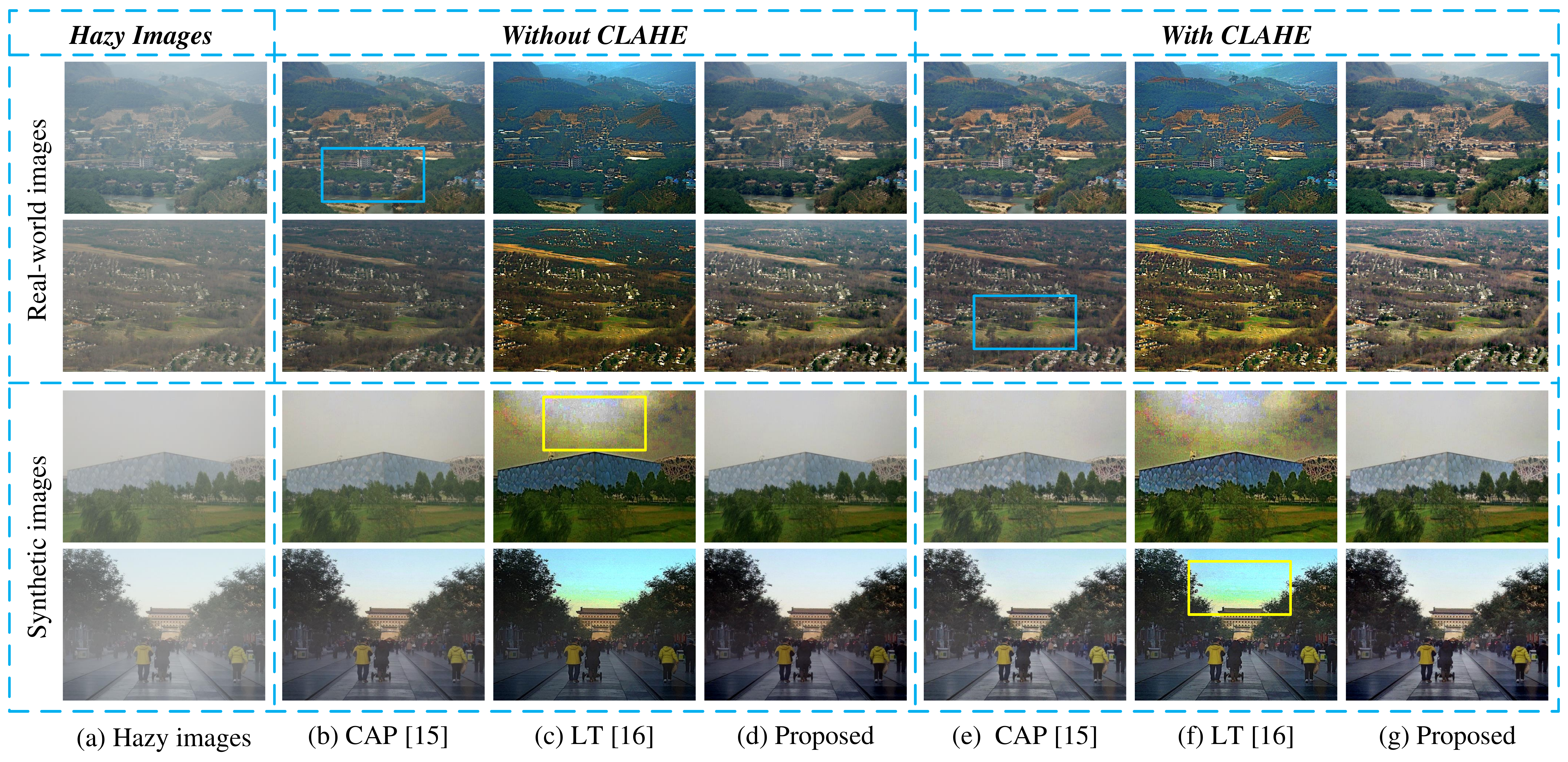}
		\caption{Comparison of dehazing results with CAP \cite{cap} and LT\cite{lt} in both real-world and synthetic hazy images.}\label{fig_other_depthbased}
	\end{center}
\vspace{-3mm}
\end{figure*}
\begin{figure}
	\begin{center}
		\includegraphics[width=3.2in]{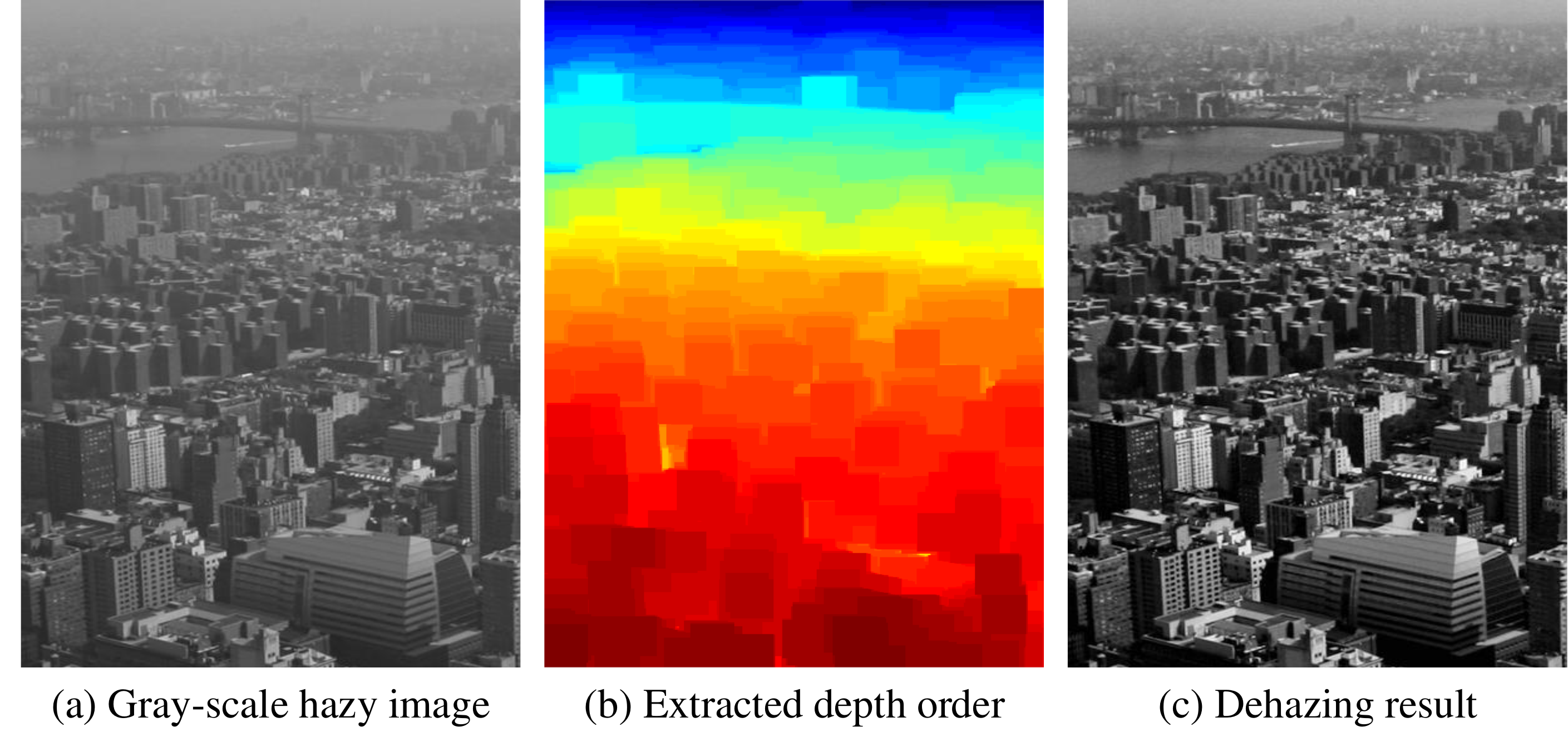}
		\caption{Dehazing result in gray-scale hazy image.}\label{fig_gray_image}
	\end{center}
\vspace{-6mm}
\end{figure}

\subsection {Quantitative Evaluation}

For a more convincing evaluation, five widely used metrics are adopted for the quantitative evaluation. Specifically, the quality of the contrast restoration $\bar r$ \cite{metric} and Fog Aware Density Evaluator (FADE) \cite{defade} are used for the real-world hazy images without corresponding clear images. Additionally, PSNR, structural similarity index measure (SSIM) \cite{ssim}, and CIEDE2000 color difference \cite{ciede} are employed for synthetic images that have corresponding ground-truth images. Note that a larger $\bar r$ value or a lower FADE value indicates a better dehazing performance.

The scores of different methods in terms of $\bar r$ and FADE in the dehazing results of Fig. \ref{fig_real_world} are given in Table \ref{tab_real_world}, in which the best scores are highlighted. As presented in Table \ref{tab_real_world}, the proposed method yields the best scores of $\bar r$ in all hazy images, which indicates its superiority in increasing visibility and revealing structure. Note that, although HL \cite{hl} and CEEF \cite{ceef} produce the best values of FADE, the dehazing results obtained by these methods appear too dark in some regions and are followed by over-saturation (see the rectangles in Figs. \ref{fig_real_world}(e) and \ref{fig_real_world}(g)), which can reduce the level of fog aware density and thus lead to a lower value of FADE. However, these dehazing results are visually unpleasing. By comparison, the proposed method achieves much smaller FADE values than hazy images without the production of color distortion, and thus obtains the impressive results.

To further evaluate the dehazing performance, we compare the dehazing performance on three benchmark hazy datasets, i.e., O-HAZE \cite{ohaze}, I-HAZE \cite{ihaze}, HSTS \cite{RESIDE}, and SOTS (outdoor set) datasets \cite{RESIDE}, which provide paired hazy images and haze-free images. The scores of different methods on these three datasets regarding PSNR, SSIM, and CIEDE2000  are presented in Table \ref{tab_synthetic}, in which the best and second-best scores are highlighted in red and blue, respectively. Note that the images of O-HAZE and I-HAZE are resized to the suggested size in \cite{ohaze} to facilitate the implementation of methods requiring high memory requirements. 

Table \ref{tab_synthetic} shows that the proposed method attains the best values of PSNR, SSIM, and CIEDE2000 on both O-HAZE and I-HAZE datasets, which implies its superiority in recovering real structure and actual color. This improvement can be attributed to that the proposed method embeds the extracted depth order into the devised dehazing model and provides effective guidance for the dehazing process, thereby bringing a higher restoration quality. Benefiting from the strong learning ability of CNN, DehazeNet \cite{dehazenet} and USID-Net \cite{usid} appear to obtain most best or second-best scores on HSTS dataset, and the proposed method achieves the second-best score regarding SSIM metric. However, it should be noted that the dehazing performance of DehazeNet and USID-Net is hard to maintain when facing real-world hazy images due to over-fitting, which can be observed in Figs. \ref{fig_real_world}(c) and \ref{fig_real_world}(k). In comparison, the proposed method performs better haze removal in real-world images and attains competitive results on synthetic datasets, which indicates its comprehensive improvement in dehazing performance. For the outdoor set of SOTS dataset, the proposed methods also achieves competitive performance, which further validates its generalization.

Additionally, to facilitate diverse comparison,  we also compare the dehazing performance of different methods in a recent benchmark dataset BeDDE\cite{bedde}, which contains two specially designed metrics, i.e., Visibility Index (VI) for visibility restoration evaluation and Realness Index (RI) for realness restoration evaluation. Note that VI and RI are the customized evaluation metrics for BeDDE dataset. A larger VI value means better visibility restoration and a larger RI value implies better realness recovery. The qualitative results of different methods on BeDDE are presented in Table \ref{tab_bedde}. 

As can be observed in Table \ref{tab_bedde}, the proposed method achieves the best score in RI and the second-best score in VI, which further implies its superiority in improving visibility and preserving original structure and real color. Note that DCP \cite{darkchannel} has a natural advantage in VI metric because the transmission used to calculate VI value is estimated by DCP. Additionally, the result of the fusion-based method DEFADE\cite{defade} is omitted, because it fails to run in the hazy images of BeDDE, which contain totally dark local areas.

\subsection {Run Time}
Real-time applications or high-resolution images are picky about the computational efficiency. Therefore, the comparison of different methods regarding run time is carried out in images with various resolutions. For an unbiased evaluation, all methods are implemented on the same CPU, and the run time at each resolution is taken from the average value of five simulations from the resized images of Fig. 12. The results of each method are summarized in Table \ref{tab_runtime}. Note that the symbol “ — ” denotes “ out of memory ”. It is observed that the proposed method achieves the best computational efficiency at all resolutions, which demonstrates its better competitiveness for real-time tasks.

\subsection {Link with Previous Work}
The proposed method is developed on the extracted depth order from hazy images. Depth-relevant color characteristics have been demonstrated to be useful for image dehazing in previous work, i.e., CAP\cite{cap} and LT \cite{lt}. Therefore, we specifically conduct a visual comparison to intuitively demonstrate the dehazing performance difference, which is shown in Fig. \ref{fig_other_depthbased}.

As shown in Fig. \ref{fig_other_depthbased}, CAP \cite{cap} fails to remove the haze in distant regions and dense haze regions and thus produce many haze residues in the dehazing results (see the blue rectangles in Figs. \ref{fig_other_depthbased}(b) and \ref{fig_other_depthbased}(e)). This poor performance in severely degraded regions is attributed to its limited estimation performance for scene depth. For LT \cite{lt}, the transmission values are under-estimated in the distant areas, leading to over-exposure and noise amplification (see the yellow rectangles in Figs. \ref{fig_other_depthbased}(c) and \ref{fig_other_depthbased}(f)). In contrast, the proposed method effectively alleviates the problems of haze residue and noise amplification in both real-world and synthetic hazy images, regardless of whether CLAHE is applied. It mainly benefits from the avoidance of direct estimation for depth value and the effective exploitation for depth order in the transmission estimation process, thereby bringing a comprehensive improvement in dehazing performance. 

Moreover, it is worth mentioning that the proposed method also works well for gray-scale hazy images; it is easy to verify that the proposed depth order extraction strategy also holds for the luminance channel in gray images. A dehazing example for gray-scale hazy image is presented in Fig. \ref{fig_gray_image}.

\section{Conclusion}
In this paper, we propose a depth order guided single image dehazing method. The proposed method utilizes the consistency of depth perception as the global constraint and provides powerful guidance for image dehazing, thereby achieving dehazing results with higher quality. Specifically, a simple yet effective approximation strategy is proposed to extract the depth order in hazy images, which gives the reference of depth perception. Then, a depth order embedded transformation model is established for transmission estimation, which removes haze while ensuring an unchanged depth order. The embedded depth order, which offers global constraint, brings a more effective utilization of global information, thereby achieving a considerable quality improvement of dehazing results. Additionally, the proposed method does not involve any pre-processing or iteration steps, which enables a highly competitive efficiency. Extensive experiments show that the proposed method achieves a better dehazing performance against most state-of-the-art dehazing methods.

\ifCLASSOPTIONcaptionsoff
  \newpage
\fi

\bibliographystyle{IEEEtran}
\bibliography{Bibliography}

\vfill

\end{document}